\newcommand{\Gtri}{\ensuremath{G_{\Delta}}}
\newif\ifcomment
\newcommand{\bluecomment}[1]{\ifcomment\color{blue} #1 \color{black}\fi}
\newif\ifconf
\newif\iffigabbrv
\newcommand{\figtext}{\iffigabbrv Fig.\else Figure\fi}
\newcommand{\figstext}{\iffigabbrv Figs.\else Figures\fi}
\title{Deadlock and Noise in Self-Organized Aggregation Without Computation}
\titlerunning{Aggregation Without Computation}
\author{Joshua J. Daymude}{Biodesign Center for Biocomputing, Security and Society, Arizona State University, Tempe, AZ}{jdaymude@asu.edu}{https://orcid.org/0000-0001-7294-5626}{}
\author{Noble C. Harasha}{The Peggy Payne Academy at McClintock High School, Tempe, AZ}{nharasha@mit.edu}{}{}
\author{Andr\'ea W. Richa}{School of Computing and Augmented Intelligence, Arizona State University, Tempe, AZ}{aricha@asu.edu}{https://orcid.org/0000-0003-3592-3756}{}
\author{Ryan Yiu}{School of Computing and Augmented Intelligence, Arizona State University, Tempe, AZ}{ryiu012@gmail.com}{}{}
\authorrunning{J.\ J.\ Daymude, N.\ C.\ Harasha, A.\ W.\ Richa, and R.\ Yiu}
\keywords{Swarm robotics, self-organization, aggregation, geometry}
\begin{document}

\maketitle

\begin{abstract}
Aggregation is a fundamental behavior for swarm robotics that requires a system to gather together in a compact, connected cluster.
In 2014, Gauci et al.\ proposed a surprising algorithm that reliably achieves swarm aggregation using only a binary line-of-sight sensor and no arithmetic computation or persistent memory.
It has been rigorously proven that this algorithm will aggregate one robot to another, but it remained open whether it would always aggregate a system of $n > 2$ robots as was observed in experiments and simulations.
We prove that there exist deadlocked configurations from which this algorithm cannot achieve aggregation for $n > 3$ robots when the robots' motion is uniform and deterministic.
On the positive side, we show that the algorithm (\textit{i}) is robust to small amounts of error, enabling deadlock avoidance, and (\textit{ii}) provably achieves a linear runtime speedup for the $n = 2$ case when using a cone-of-sight sensor.
Finally, we introduce a noisy, discrete adaptation of this algorithm that is more amenable to rigorous analysis of noise and whose simulation results align qualitatively with the original, continuous algorithm.
\end{abstract}

\section{Introduction} \label{sec:intro}

The fields of swarm robotics~\cite{Brambilla2013-swarmrobotics,Hamann2018-swarmrobotics,Dorigo2020-futureswarms,Dorigo2021-swarmrobotics} and programmable matter~\cite{Angluin2006-populationprotocols,Woods2013-nubot,Flocchini2019-mobileentities} seek to engineer systems of simple, easily manufactured robot modules that can cooperate to perform tasks involving collective movement and reconfiguration.
Our present focus is on the \textit{aggregation problem} (also referred to as ``gathering''~\cite{Cieliebak2003-gathering,Flocchini2005-gatheringasync,Fates2010-gathering} and ``rendezvous''~\cite{Cortes2006-robustrendezvous,Zebrowski2007-energyrendezvous,Yu2012-rendezvous}) in which a robot swarm must gather together in a compact, connected cluster~\cite{Bayindir2016-swarmtasks}.
Aggregation has a rich history in swarm robotics as a prerequisite for other collective behaviors requiring densely connected swarms.
Inspired by self-organizing aggregation in nature~\cite{Devreotes1989-dictyostelium,Deneubourg1990-barkbeetle,Magurran1990-fishschooling,Camazine2001-selforgbio,Jeanson2005-cockroachaggregation,Mlot2011-fireantrafts}, numerous approaches for swarm aggregation have been proposed, each one seeking to achieve aggregation faster, more robustly, and with less capable individuals than the last~\cite{Agrawal2017-tunablestructures,Deblais2018-boundarycontrol,Firat2020-informedaggregation,Li2021-bobbots,Misir2021-dynamicaggregation}.

One goal from the theoretical perspective has been to identify \textit{minimal capabilities} for an individual robot such that a collective can provably accomplish a given task.
Towards this goal, Roderich Gro\ss\ and others at the Natural Robotics Laboratory have developed a series of very simple algorithms for swarm behaviors like spatially sorting by size~\cite{Gross2009-segregation1,Chen2012-segregation2}, aggregation~\cite{Gauci2014-aggregation}, consensus~\cite{Ozedmir2018-consensus}, and coverage~\cite{Ozdemir2019-coverage}.
These algorithms use at most a few bits of sensory information and express their entire structure as a single ``if-then-else'' statement, avoiding any arithmetic computation or persistent memory.
Although these algorithms have been shown to perform well in both robotic experiments and simulations with larger swarms, some lack general, rigorous proofs that guarantee the correctness of the swarm's behavior.

In this work, we investigate the Gauci et al.\ swarm aggregation algorithm~\cite{Gauci2014-aggregation} (summarized in Section~\ref{sec:aggregation}) whose provable convergence for systems of $n > 2$ robots remained an open question.
In Section~\ref{sec:negative}, we answer this question negatively, identifying deadlocked configurations from which aggregation is never achieved.
Motivated by the need to break these deadlocks, we corroborate and extend the simulation results of~\cite{Gauci2014-phdthesis} by showing that the algorithm is robust to two distinct forms of error (Section~\ref{sec:robust}).
Additionally, we prove that the time required for a single robot to aggregate to a static robot improves by a linear factor when using a cone-of-sight sensor instead of a line-of-sight sensor; however, simulations show this comparative advantage decreases for larger swarms (Section~\ref{sec:cone}).
Finally, in an effort to analyze this algorithm under an explicit modeling of noise --- as opposed to the noise implicit from the natural physics of robot collisions and slipping --- we introduce a noisy, discrete adaptation in Section~\ref{sec:discrete}.
Simulations of this discrete adaptation align qualitatively with the original algorithm in the continuous setting, but unfortunately exhibit behavior that is similarly difficult to analyze theoretically.

\section{The Gauci et al.\ Swarm Aggregation Algorithm} \label{sec:aggregation}

Given $n$ robots in arbitrary initial positions on the two-dimensional plane, the goal of the \textit{aggregation problem} is to define a controller that, when used by each robot in the swarm, eventually forms a compact, connected cluster.
Gauci et al.~\cite{Gauci2014-aggregation} introduced an algorithm for aggregation among e-puck robots~\cite{Mondada2009-epuck} that only requires binary information from a robot's (infinite range) line-of-sight sensor indicating whether it sees another robot ($I = 1$) or not ($I = 0$).
The controller $x = (v_{\ell 0}, v_{r0}, v_{\ell 1}, v_{r1}) \in [-1, 1]^4$ actuates the left and right wheels according to velocities $(v_{\ell 0}, v_{r0})$ if $I = 0$ and $(v_{\ell 1}, v_{r1})$ otherwise.
Using a grid search over a sufficiently fine-grained parameter space and evaluating performance according to a dispersion metric, they determined that the highest performant controller was:
\[x^* = (-0.7, -1, 1, -1).\]
Thus, when no robot is seen, a robot using $x^*$ will rotate around a point $c$ that is $90^\circ$ counter-clockwise from its line-of-sight sensor and $R = 14.45$ cm away at a speed of $\omega_0 = -0.75$ rad/s; when a robot is seen, it will rotate clockwise in place at a speed of $\omega_1 = -5.02$ rad/s.
The following three theorems summarize the theoretical results for this aggregation algorithm.

\begin{theorem}[Gauci et al.~\cite{Gauci2014-aggregation}] \label{thm:infiniterange}
    If the line-of-sight sensor has finite range, then for every controller $x$ there exists an initial configuration in which the robots form a connected visibility graph but from which aggregation will never occur.
\end{theorem}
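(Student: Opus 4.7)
The plan is to leverage the finite sensor range $R$ to construct an initial configuration in which the robots have a connected visibility graph at time zero, yet after an arbitrarily short interval the swarm separates into two subsets whose pairwise distances all exceed $R$ and which thereafter cannot re-establish visibility.

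First, I would characterize the trajectory of a single robot under a fixed sensor input $I \in \{0, 1\}$ for an arbitrary controller $x$. Standard differential-drive kinematics imply that the trajectory is either (i) a straight-line translation (when $v_{\ell I} = v_{r I}$), (ii) an in-place rotation (when $v_{\ell I} = -v_{r I}$), or (iii) a circular arc of some fixed, finite radius (otherwise). Cases (ii) and (iii) confine the robot to a bounded disk whose diameter is determined by $x$ alone, and this confinement is the key geometric handle for the construction.

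Second, I would construct the configuration. Place a cluster $C$ of $n-1$ robots packed tightly enough that their internal visibility graph persists as $C$ evolves under the controller, and place a single satellite robot $A$ at distance just below $R$ from one specific robot in $C$; choose $A$'s orientation so that its sensor beam points into empty space, giving $I_A = 0$ at time zero, while orienting that specific cluster robot so that its beam intersects $A$, so the visibility edge between $C$ and $A$ exists. The initial visibility graph is then connected. I would then argue that $A$ immediately begins its $I=0$ trajectory, which is either a straight line that I orient away from $C$ (case i), making $A$'s distance from $C$ grow without bound, or a bounded orbit (cases ii--iii) whose center I place so that the entire orbit lies at distance greater than $R$ from every point that $C$ can reach. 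In both subcases $A$ never re-enters sensor range of any cluster robot, so the swarm remains disconnected forever and aggregation cannot occur.

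The main obstacle is handling all controllers simultaneously, particularly those whose $I=1$ dynamics cause the entire cluster $C$ to translate rather than remain bounded: in that case I must ensure that $C$'s drift direction diverges from $A$'s trajectory, and may have to further shrink $C$'s diameter so that it does not break apart before its motion carries it safely away. A secondary, more mechanical concern is verifying that $C$ maintains its internal visibility throughout; this should follow from packing its robots sufficiently densely relative to $R$ and exploiting the rigidity of their collective motion when they share a common sensor input.
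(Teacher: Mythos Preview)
This theorem is not proved in the present paper; it is quoted from Gauci et al.\ as background, so there is no in-paper argument to compare your proposal against. Assessing the proposal on its own terms, there is a genuine internal inconsistency in your handling of cases (ii) and (iii). You require the satellite $A$ to begin at distance just below $R$ from some robot of $C$ (so that the initial visibility graph is connected), and you simultaneously require that $A$'s entire $I=0$ orbit lie at distance greater than $R$ from every point that $C$ can reach. But $A$'s initial position belongs to that orbit---it \emph{is} the orbit in case (ii), and it lies on the orbit circle in case (iii)---while $C$'s reachable set certainly contains $C$'s initial positions, one of which is within $R$ of $A$'s starting point by your own construction. These two requirements are incompatible. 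Concretely, in case (ii) the satellite never translates at all, so it remains within $R$ of that cluster robot's starting position; as $A$ spins in place its beam sweeps across that robot, $I_A$ flips to $1$, and the premise that $A$ remains on its $I=0$ orbit forever collapses immediately.

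To repair the argument you would have to track $A$ through its alternation between the $I=0$ and $I=1$ regimes and show that the net drift still separates $A$ from $C$, but this depends on the $I=1$ dynamics (which you have not constrained at all) and on $C$'s motion (which you flag as an obstacle but do not resolve). The controller with $v_{\ell 0} = -v_{r 0}$ and $v_{\ell 1} = -v_{r 1}$ already shows that a single satellite-escapes-cluster template cannot work uniformly: under that controller no robot ever translates, so the correct bad configuration is simply any non-touching connected one, and no ``drift'' argument is available. A proof covering \emph{every} controller will need distinct constructions for qualitatively different $(I{=}0,\,I{=}1)$ regimes rather than one geometric picture, and your plan does not yet supply those.
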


\begin{theorem}[Gauci et al.~\cite{Gauci2014-aggregation}] \label{thm:staticaggregate}
    One robot using controller $x^*$ will always aggregate to another static robot or static circular cluster of robots.
\end{theorem}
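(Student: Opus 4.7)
The plan is to use the distance $d(t) = \|p(t)\|$ from the active robot at $p(t)$ to the target, placed at the origin, as a potential function and prove monotone convergence to contact. Under $x^*$, the dynamics are piecewise: whenever $I = 1$, the robot is stationary and only its heading $\hat{\theta}$ rotates clockwise at rate $|\omega_1|$, so $d$ is constant on $I=1$ intervals; whenever $I = 0$, $p$ orbits clockwise about the fixed center $c = p + R\,\mathrm{Rot}_{+90^\circ}(\hat{\theta})$ at rate $|\omega_0|$, with heading rotating in lockstep. Hence the entire evolution of $d$ is determined by the $I=0$ arcs, and it suffices to study the discrete sequence of transitions between the two sensing modes.

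First, I sample the trajectory at the $I=1 \to I=0$ transitions $t_0 < t_1 < \cdots$ and set $d_k := d(t_k)$. At each $t_k$, the clockwise spin has just swept the sensor past the target, so the ray from $p(t_k)$ in direction $\hat{\theta}(t_k)$ is tangent to the target disk on the specific side opposite the spin direction. Next, I verify that each arc terminates in finite time at an $I=0\to I=1$ transition: the orbit is a fixed circle of radius $R$, and a polar-coordinate argument centered at the target shows the sensor ray must sweep back across the target disk within one revolution for every admissible starting pose, so the robot cannot orbit away indefinitely.

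The heart of the proof is a one-step contraction $d_{k+1} < d_k$. Using the two tangency conditions (at the start and end of the arc) together with the lockstep rotation of heading and orbit-angle, I express $d_{k+1}$ as a closed-form function of $d_k$ via the chord-and-tangent geometry of the orbit circle and the target's visibility circle of radius $\rho$ (where $\rho$ is the single-robot radius, or in the cluster case the cluster radius). A direct trigonometric computation using the specific parameters of $x^*$ yields $d_{k+1} = d_k - \delta(d_k)$ with $\delta(d_k) > 0$ whenever $d_k$ exceeds the contact distance and $\delta$ bounded below on compact sets away from contact. Monotone convergence of $\{d_k\}$ then drives $d_k$ down to the contact distance, completing aggregation.

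The principal obstacle is the contraction step: one must carefully identify which tangent is active (clockwise vs.\ counter-clockwise of the target in the robot's frame), relate the arc's traversal angle to the change in heading, and verify algebraically that the resulting update is strictly decreasing uniformly in $d_k$. The extension from a single static robot to a static circular cluster is handled implicitly, since the visibility boundary of a circular cluster is itself a circle, so the tangency-based geometry transfers verbatim with $\rho$ replaced by the cluster radius.
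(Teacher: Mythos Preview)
Your high-level architecture matches what Gauci et al.\ do (this paper only cites the result, but it reproduces and extends the method in its proof of Theorem~\ref{thm:coneofsight}, which specializes to the present statement when $\beta = 0$): sample at mode transitions, exploit the tangency condition at each transition, and establish a one-step contraction. The substantive difference is the choice of potential. You track the robot--to--target distance $\|\boldsymbol{p}_i - \boldsymbol{p}_j\|$, whereas the original argument tracks $d = \|\boldsymbol{c}_i - \boldsymbol{p}_j\|$, the distance from the \emph{center of rotation} to the target. That choice is what makes the proof clean: $\boldsymbol{c}_i$ is constant along each $I=0$ arc, so $d$ changes only during the brief $I=1$ spin, and the update has an explicit closed form. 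For a line-of-sight sensor one gets exactly $(d')^2 = d^2 - 4Rr_j$, a fixed decrement in $d^2$ per revolution, and termination once $d \le R + r_i + r_j$ yields the quadratic bound on the number of revolutions.

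With your potential the contraction step is a genuine gap rather than a routine computation. Note that $p(t_k)$ and $p(t_{k+1})$ both lie on the \emph{same} orbit circle of radius $R$ about the current center $\boldsymbol{c}_i$, so their distances to the target can differ by as much as $2R$ in either direction purely from where on that circle the two tangency events fall; meanwhile the actual progress per cycle---the drift of $\boldsymbol{c}_i$ toward the target---is only of order $r_j/d$ at long range. To establish $d_{k+1} < d_k$ you would have to identify exactly which common tangents of the orbit circle and the target disk are active at the exit and re-entry events and then verify the resulting positions are ordered correctly; this is considerably more delicate than the proposal suggests, and without it the monotone-convergence argument does not close. Switching the potential to $\|\boldsymbol{c}_i - \boldsymbol{p}_j\|$ eliminates the difficulty.
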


\begin{theorem}[Gauci et al.~\cite{Gauci2014-aggregation}] \label{thm:twoaggregate}
    Two robots both using controller $x^*$ will always aggregate.
\end{theorem}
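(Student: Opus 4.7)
The plan is to work in a reference frame fixed to robot $A$, tracking the relative position of $B$ together with the difference in their headings; aggregation is equivalent to the relative position $B - A$ shrinking to within the inter-robot touching distance. The state space decomposes into regimes indexed by the visibility pair $(I_A, I_B) \in \{0,1\}^2$. In the doubly-sighted regime $I_A = I_B = 1$ both robots spin in place at $\omega_1$, so positions are unchanged; because $|\omega_1|$ is roughly $6.7\,|\omega_0|$, this regime and the one-sided regimes persist only briefly and serve essentially as ``visibility resets'' in which the sensing robot rapidly pivots its heading past the sensed one.

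The core regime is the unsighted one, $I_A = I_B = 0$, in which each robot drives backward along a fixed circle of radius $R$ whose centre lies $90^\circ$ counter-clockwise from its current heading. My first technical step is to write the relative motion of $B$ with respect to $A$ in closed form: since both orbits share the same angular velocity $\omega_0$, a standard sum-to-product identity gives $B(t) - A(t) = (c_B - c_A) + 2R \sin((\phi_B - \phi_A)/2)\cdot u(t)$, where $u(t)$ is a unit vector rotating at angular speed $\omega_0$. Thus during any single unsighted phase $B$ traces an arc of a circle centred at the fixed offset $c_B - c_A$, whose radius is determined entirely by the initial heading mismatch $\phi_B - \phi_A$.

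Next, I would use this closed form to compute the relative orientation and distance at which $A$'s line-of-sight sensor next sweeps across $B$. The sighting event triggers $A$ to pivot rapidly in place, and by the asymmetry $|\omega_1| \gg |\omega_0|$, $A$ re-enters the unsighted regime with its heading freshly aimed near $B$; the subsequent backward-arcing motion then decreases the inter-robot distance by the same geometry that underlies Theorem~\ref{thm:staticaggregate}. Combining these ingredients, I would define a Lyapunov-style potential on the reduced state $(d(A,B),\, \theta_A - \theta_B)$ and show that each full unsighted/sighting cycle decreases it by an amount bounded below whenever the robots are not yet aggregated.

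The main obstacle I expect is ruling out persistent near-miss configurations in which $A$'s line of sight repeatedly sweeps just past $B$ without a sighting ever occurring, so the unsighted regime continues indefinitely and the Lyapunov potential stalls. Controlling these requires exploiting both the equality of the two orbit radii and the specific ratio $\omega_1/\omega_0$ built into $x^*$, and is likely the hardest step. Once this geometric reduction is in place, I expect the remainder of the argument to reduce the two-body problem to a statement morally analogous to Theorem~\ref{thm:staticaggregate}, lifted to the co-rotating frame.
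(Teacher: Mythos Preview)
This theorem is not proved in the present paper at all: it is quoted as a prior result of Gauci et al.\ \cite{Gauci2014-aggregation} (their Theorem~5.3), and the paper supplies no argument of its own. So there is no ``paper's own proof'' to compare your proposal against; any assessment has to be on the internal merits of your sketch, or against the original Gauci et al.\ argument, which is not reproduced here.

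On its own terms, what you have written is a plan rather than a proof, and you correctly flag the hard part yourself. The closed-form relative motion in the $(0,0)$ regime is fine, and the observation that a sighting event followed by rapid in-place rotation acts like a heading reset is the right qualitative picture. But the two places where the argument is genuinely incomplete are (\textit{i}) the Lyapunov claim --- you assert that each unsighted/sighting cycle decreases a potential by a bounded amount, but you have not exhibited the potential or shown the decrease, and the interaction of the two robots' sighting events (which are generically interleaved, not synchronized) makes this nontrivial --- and (\textit{ii}) the ``persistent near-miss'' obstruction, which you identify but do not resolve. Without at least one of these nailed down, the proposal does not yet rise to a proof; it is a reasonable outline of where a proof might live.
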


Our main goal, then, is to investigate the following conjecture that is well-supported by evidence from simulations and experiments.

\begin{conjecture} \label{conj:naggregate}
    A system of $n > 2$ robots each using controller $x^*$ will always aggregate.
\end{conjecture}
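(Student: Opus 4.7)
The plan is to approach the conjecture from two angles simultaneously: a forward attempt to prove aggregation by exhibiting a decreasing potential, and an adversarial search for symmetric configurations that might persist indefinitely. Since Theorem \ref{thm:twoaggregate} already establishes aggregation for $n=2$ and Theorem \ref{thm:staticaggregate} shows that a single robot joins any static cluster, a natural pathway to the general result is to argue that some subset of the swarm eventually behaves as a near-static attractor for the rest, after which the remaining robots aggregate one at a time.

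For the positive direction, I would look for a Lyapunov-style functional on configuration space --- for instance the diameter of the point set, the radius of the smallest enclosing disk, or a visibility-weighted sum of pairwise distances --- and try to show it is non-increasing under the flow induced by $x^*$, with strict decrease outside a measure-zero set. A complementary strategy is to partition trajectories into phases determined by the visibility graph: when a robot sees at least one neighbor it spins clockwise at $\omega_1$, which one hopes drives neighbors into contact; when it sees no one, its orbital motion at radius $R$ around the offset center $c$ should rotate it until some other robot re-enters its line of sight. Composing these behaviors, one would aim to show that the visibility graph becomes and remains connected and that the maximum pairwise distance among mutually visible pairs eventually collapses.

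The main obstacle, which I expect to dominate the analysis, is that the controller is purely deterministic, the sensor is one bit, and the dynamics are only piecewise smooth. These ingredients conspire to admit highly symmetric orbits on which any reasonable potential function fails to strictly decrease. Concretely, one can attempt to construct fixed points (up to rigid motion) of the dynamics by taking $n$ robots arranged with $n$-fold rotational symmetry and headings chosen so that each robot's line-of-sight passes cleanly between its neighbors (so $I=0$ for every robot), then solving for the inter-robot radius at which the prescribed rotation of each robot about its own center $c$ is consistent with a global rigid rotation of the whole configuration. If such a radius exists for some $n>2$, the configuration is a genuine deadlock and disproves the conjecture.

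My honest expectation, given how restrictive the algorithm's informational footprint is, is that this adversarial search succeeds and the plan shifts from proving the conjecture to constructing an explicit family of counterexamples. Even after locating a candidate configuration by a geometric calculation, the residual difficulty is verifying that the sight condition $I=0$ is preserved along the entire orbit rather than only at the initial instant --- this requires checking tangency between the finite-radius robot bodies and the sweeping sight lines throughout one full period of the rigid rotation, and excluding grazing intersections. If these checks go through for some small $n$, Theorem \ref{thm:infiniterange}-style arguments suggest the deadlock can be embedded into larger swarms by adding further symmetric shells, giving a disproof for all sufficiently large $n$.
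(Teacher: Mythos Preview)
Your instinct that the conjecture is false is correct, and your overall plan --- attempt a potential argument while simultaneously hunting for symmetric invariant configurations --- is reasonable. However, the specific counterexample mechanism you propose is the wrong one, and the paper in fact discusses and rules it out.

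Your adversarial construction seeks a \emph{livelock}: an $n$-fold symmetric arrangement in which every robot has $I=0$ and the individual rotations about the offset centers $c$ compose into a global rigid rotation, so the swarm orbits forever without collapsing. This is exactly the ``symmetric dance'' conjectured by Becker and reported in the paper; simulations there show that such cyclic configurations do \emph{not} persist --- the robots oscillate but reach and remain near minimum dispersion, and for larger $n$ collisions break the symmetry outright. So the residual verification you flag (that $I=0$ is maintained along the whole orbit and no grazing contact occurs) would in fact fail, and your plan would stall at precisely that step.

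The mechanism you are missing is \emph{physical blocking}. The robots are rigid discs, and a clockwise-searching controller with $I=0$ attempts to move the robot along its orbit; if another robot occupies the point it is trying to move into, it simply cannot advance. The paper's disproof (for $n>3$) places robots in touching pairs oriented so that each sees nothing ($I=0$) and each is trying to rotate directly \emph{into} its partner. The result is a genuinely static deadlock --- no motion, no change in sensor readings, ever --- and the pairs are spaced apart so the configuration is disconnected. An extra mutually blocking triplet handles odd $n$. This construction is far simpler than solving for an invariant orbit: there is no dynamics to track, no tangency to verify along a trajectory, and extending to larger $n$ is just adding more pairs rather than nesting symmetric shells.
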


Throughout the remaining sections, we measure the degree of aggregation in the system using the following metrics:
\begin{itemize}
    \item \textit{Smallest Enclosing Disc Circumference.} The smallest enclosing disc of a set of points $S$ in the plane is the circular region of the plane containing $S$ and having the smallest possible radius.
    Smaller circumferences correspond to more aggregated configurations.
    
    \item \textit{Convex Hull Perimeter.} The convex hull of a set of points $S$ in the plane is the smallest convex polygon enclosing $S$.
    Smaller perimeters correspond to more aggregated configurations.
    Due to the flexibility of convex polygons, this metric is less sensitive to outliers than the smallest enclosing disc which is forced to consider a circular region.
    
    \item \textit{Dispersion (2nd Moment).} Adapting Gauci et al.~\cite{Gauci2014-aggregation} and Graham and Sloane~\cite{Graham1990-pennypacking}, let $p_i$ denote the $(x,y)$-coordinate of robot $i$ on the continuous plane and $\overline{p} = \frac{1}{n}\sum_{i=1}^np_i$ be the centroid of the system.
    Dispersion is defined as:
    \[\sum_{i=1}^n||p_i - \overline{p}||_2 = \sum_{i=1}^n\sqrt{(x_i - \overline{x})^2 + (y_i - \overline{y})^2}\]
    Smaller values of dispersion correspond to more aggregated configurations.
    
    \item \textit{Cluster Fraction.} A cluster is defined as a set of robots that is ``connected'' by means of (nearly) touching.
    Following Gauci et al.~\cite{Gauci2014-aggregation}, our final metric for aggregation is the fraction of robots in the largest cluster.
    Unlike the previous metrics, larger cluster fractions correspond to more aggregated configurations.
\end{itemize}

We use dispersion as our primary metric of aggregation since it is the metric that is least sensitive to outliers and was used by Gauci et al.~\cite{Gauci2014-aggregation}, enabling a clear comparison of results.

\section{Impossibility of Aggregation for More Than Three Robots} \label{sec:negative}

In this section, we rigorously establish a negative result indicating that Conjecture~\ref{conj:naggregate} does not hold in general.
This result identifies a deadlock that, in fact, occurs for a large class of controllers that $x^*$ belongs to.
We say a controller $x = (v_{\ell 0}, v_{r0}, v_{\ell 1}, v_{r1}) \in [-1, 1]^4$ is \textit{clockwise-searching} if $v_{r0} < v_{\ell 0} < 0$.
In other words, a clockwise-searching controller maps $I = 0$ (i.e., the case in which no robot is detected by the line-of-sight sensor) to a clockwise rotation about the center of rotation $c$ that is a distance $R > 0$ away.\footnote{Note that an analogous version of Theorem~\ref{thm:deadlock} would hold for counter-clockwise-searching controllers if a robot's center of rotation was $90^\circ$ clockwise rather than counter-clockwise from its line-of-sight sensor.}

\begin{theorem} \label{thm:deadlock}
    For all $n > 3$ and all clockwise-searching controllers $x$, there exists an initial configuration of $n$ robots from which the system will not aggregate when using controller $x$.
\end{theorem}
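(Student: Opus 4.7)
My plan is to exhibit a highly symmetric initial configuration whose evolution under $x$ reduces to a rigid rotation about a single fixed center, so that the swarm can never contract to a cluster. Let $R > 0$ be the rotation radius induced by the $I = 0$ branch $(v_{\ell 0}, v_{r 0})$ of $x$; for a clockwise-searching controller this is well-defined and positive. Fix an arbitrary point $c$ in the plane and place the $n$ robots at the vertices of a regular $n$-gon inscribed in the circle of radius $R$ centered at $c$. Orient each robot so that its line-of-sight sensor points along the counter-clockwise tangent to this circle at its own position; equivalently, the direction $90^{\circ}$ counter-clockwise from each sensor points from that robot toward $c$. By this choice every robot's $I = 0$ center of rotation coincides with $c$.

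Next I would verify that in this configuration no robot sees any other. Each sensor ray lies along the tangent to the circle of radius $R$ about $c$ at its robot's position, and a tangent line meets its circle only at the point of tangency; since every other robot also lies on this circle, no other robot sits on any sensor ray. Hence every robot reads $I = 0$ and executes the $I = 0$ branch of $x$, rotating clockwise about its own center, which equals $c$ for all robots. The resulting motion is a rigid rotation of the whole configuration about $c$ at a common angular speed. At every later time the robots still form a regular $n$-gon of circumradius $R$ about $c$ with sensors tangent to the circumscribing circle, so the visibility argument repeats and the rigid rotation persists indefinitely. In particular, the minimum pairwise distance stays equal to $2R\sin(\pi/n) > 0$ and the circumradius stays equal to $R$, so the system never contracts into a compact cluster and therefore never aggregates.

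The main obstacle is pinning down the visibility claim in the formal model. For idealized point robots the tangent-line argument is airtight for any $n \geq 2$. If the model attributes a positive radius to each robot then the perpendicular distance from a neighbor to the sensor ray is $R(1 - \cos(2\pi/n))$, which shrinks as $n$ grows; for very large $n$ this could drop below the sensor-triggering threshold. In that case the construction can be repaired by placing the $n$ robots at non-uniform angles along the same circle, which preserves both the common center of rotation at $c$ and the tangent sensor orientations while creating enough angular slack to keep every sensor ray clear of the other robots. Either way, the invariance of the configuration under the dynamics --- rigid rotation about $c$ with empty visibility graph throughout --- is what rules out aggregation for all $n > 3$.
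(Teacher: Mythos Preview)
Your construction has a genuine gap: it does not scale to arbitrary $n$. The radius $R$ is determined by the controller $x$ (for $x^*$ it is $R \approx 14.45$~cm) and is \emph{not} a free parameter, while the robots are discs of fixed radius $r > 0$. Placing $n$ robot centers on the circle of radius $R$ requires adjacent centers to be at distance $2R\sin(\pi/n)$, and non-overlap forces $2R\sin(\pi/n) \geq 2r$, i.e., $n \leq \pi/\arcsin(r/R)$. For the e-puck parameters this caps $n$ around a dozen; beyond that the robots physically intersect. The visibility failure you flag (neighbor-to-ray distance $R(1-\cos(2\pi/n)) < r$) kicks in even earlier. Your proposed repair of using non-uniform angles on the \emph{same} circle cannot help: the circumference $2\pi R$ is fixed, so redistributing angles only trades one crowding problem for another, and in any case cannot accommodate more than $\lfloor \pi R/r \rfloor$ discs. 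Since every robot's $I=0$ center of rotation must lie at distance exactly $R$ from that robot, forcing a common center $c$ confines all robots to this single circle, and the limitation is intrinsic to the rigid-rotation idea.

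The paper's argument avoids this entirely by building a \emph{static} deadlock that scales linearly. Robots are placed in touching pairs along a line, each pair oriented so that neither robot sees the other and each robot's attempted clockwise rotation about its own center is physically blocked by its partner; for odd $n$ one pair is replaced by a mutually blocking triple. Consecutive pairs are separated by gaps, so the configuration is disconnected and hence non-aggregated, and because nothing moves the configuration persists forever. The key difference is that the paper exploits \emph{physical blocking} rather than geometric invariance, which lets the construction extend indefinitely in space instead of being confined to a bounded circle.
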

\begin{proof}
    At a high level, we construct a deadlocked configuration by placing the $n$ robots in pairs such that no robot sees any other robot with its line-of-sight sensor --- implying that all robots continually try to rotate about their centers of rotation --- and each pair's robots mutually block each other's rotation.
    This suffices for the case that $n$ is even; when $n$ is odd, we extend the all-pairs configuration to include one mutually blocking triplet.
    Thus, no robots can move in this configuration since they are all mutually blocking, and since no robot sees any other they remain in this disconnected (non-aggregated) configuration indefinitely.
    
    \begin{figure}[t]
        \centering
        \begin{subfigure}{.9\textwidth}
            \centering
            \includegraphics[scale=0.35]{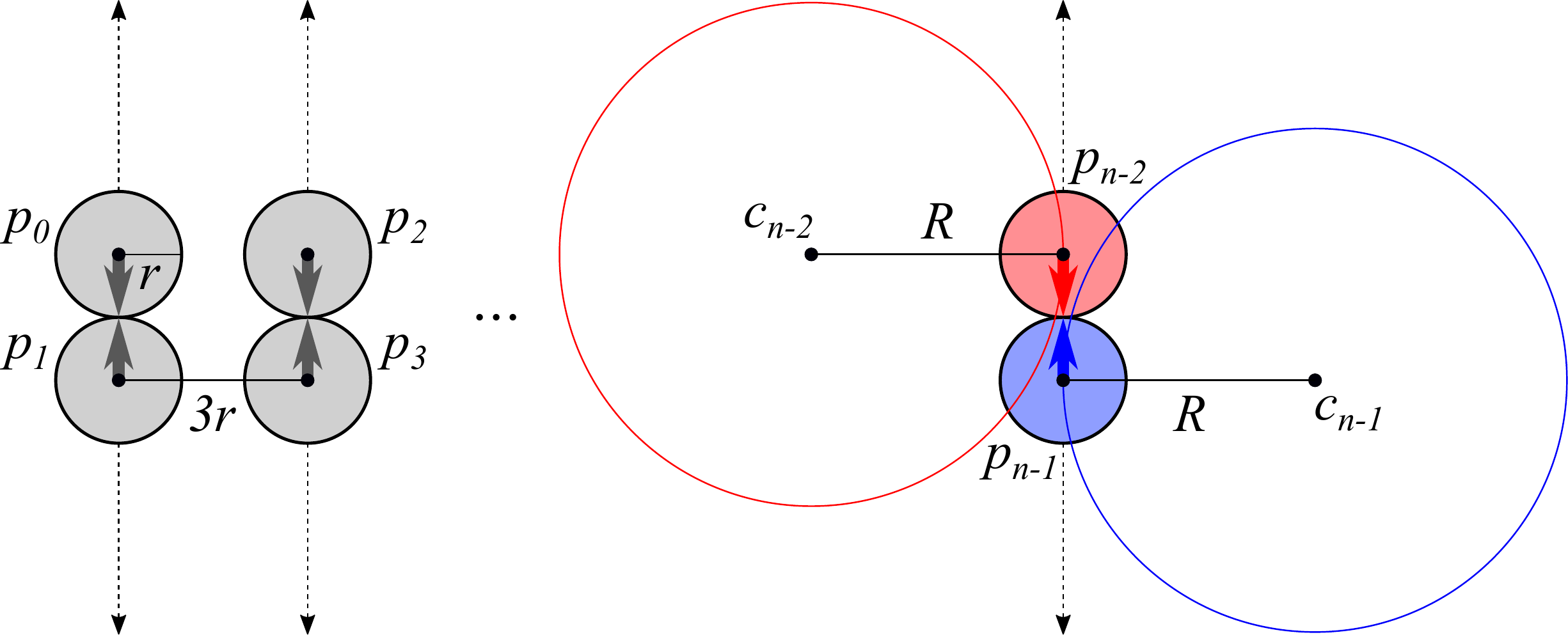}
            \caption{\centering}
            \label{fig:deadlockeven}
        \end{subfigure} \\
        \begin{subfigure}{.9\textwidth}
            \centering
            \includegraphics[scale=0.35]{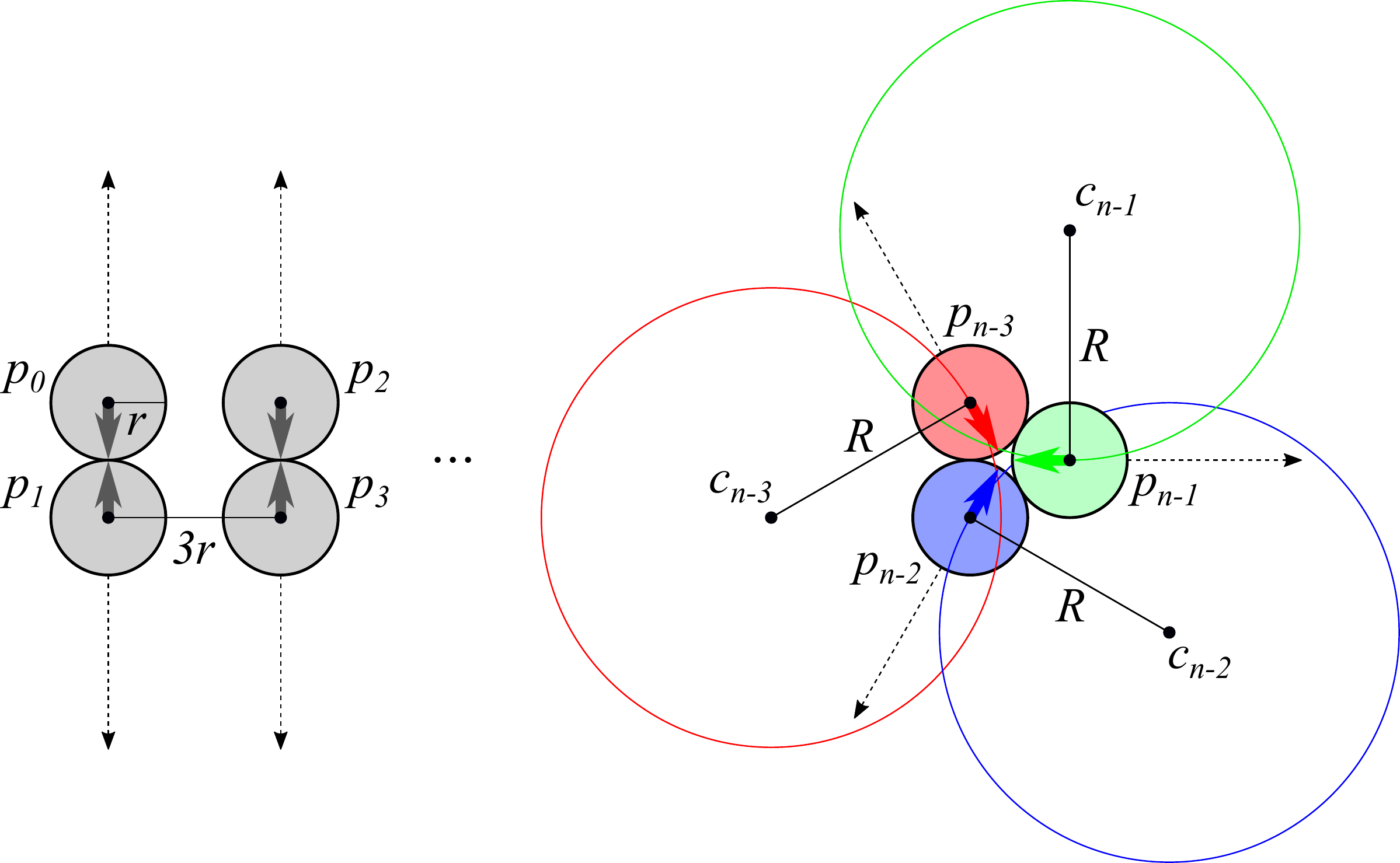}
            \caption{\centering}
            \label{fig:deadlockodd}
        \end{subfigure}
        \caption{The deadlocked configuration described in the proof of Theorem~\ref{thm:deadlock} for \textbf{\textsf{(a)}} $n > 3$ even and \textbf{\textsf{(b)}} $n > 3$ odd that would remain disconnected and non-compact indefinitely.}
        \label{fig:deadlock}
    \end{figure}
    
    In detail, first suppose $n > 3$ is even.
    As in~\cite{Gauci2014-aggregation}, let $r$ denote the radius of a robot.
    For each $i \in \{0, 1, \ldots, \frac{n}{2}-1\}$, place robots $p_{2i}$ and $p_{2i+1}$ at points $(3r \cdot i, r)$ and $(3r \cdot i, -r)$, respectively.
    Orient all robots $p_{2i}$ with their line-of-sight sensors in the $+y$ direction, and orient all robots $p_{2i+1}$ in the $-y$ direction.
    This configuration is depicted in \figtext~\ref{fig:deadlockeven}.
    Due to their orientations, no robot can see any others; thus, since $x$ is a clockwise-searching controller, all robots $p_{2i}$ are attempting to move in the $-y$ direction while all robots $p_{2i+1}$ are attempting to move in the $+y$ direction.
    Each pair of robots is mutually blocking, resulting in no motion.
    Moreover, since each consecutive pair of robots has a horizontal gap of distance $r$ between them, this configuration is disconnected and thus non-aggregated.
    
    It remains to consider when $n > 3$ is odd.
    Organize the first $n-3$ robots in pairs according to the description above; since $n$ is odd, we have that $n-3$ must be even.
    Then place robot $p_{n-1}$ at point $(3r(\frac{n}{2} - 1) + \sqrt{3}r, 0)$ with its line-of-sight sensor oriented at $0^\circ$ (i.e., the $+x$ direction), robot $p_{n-2}$ at point $(3r(\frac{n}{2} - 1), -r)$ with orientation $240^\circ$, and robot $p_{n-3}$ at point $(3r(\frac{n}{2} - 1), r)$ with orientation $120^\circ$, as depicted in \figtext~\ref{fig:deadlockodd}.
    By a nearly identical argument to the one above, this configuration will also remain deadlocked and disconnected.
    
    Therefore, we conclude that in all cases there exists a configuration of $n$ robots from which no clockwise-searching controller could achieve aggregation.
\end{proof}

We have shown that no clockwise-searching controller (including $x^*$) can be guaranteed to aggregate a system of $n > 3$ robots starting from a deadlocked configuration, implying that Conjecture~\ref{conj:naggregate} does not hold in general.
Moreover, not all deadlocked configurations are disconnected: \figtext~\ref{fig:deadlockring} shows a connected configuration that will never make progress towards a more compact configuration because all robots are mutually blocked by their neighbors.
Notably, these deadlocks are not observed in practice due to inherent noise in the physical e-puck robots.
Real physics work to aggregation's advantage: if the robots were to ever get ``stuck'' in a deadlock configuration, collisions and slipping perturb the precise balancing of forces to allow the robots to push past one another.
This motivates an explicit inclusion and modeling of noise in the algorithm, which we will return to in Sections~\ref{sec:robust} and~\ref{sec:discrete}.

\begin{figure}
    \centering
    \includegraphics[scale=0.28]{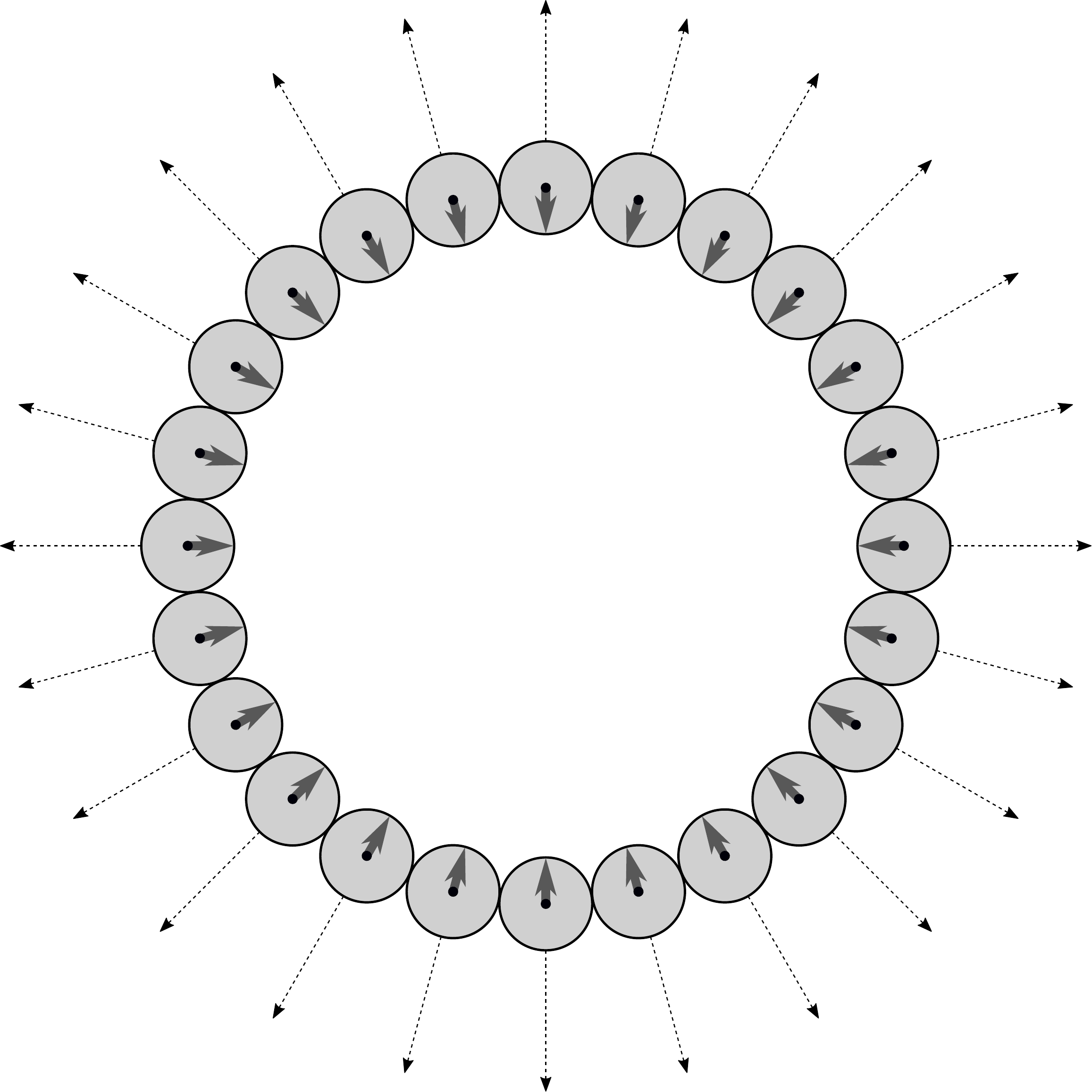}
    \caption{A connected deadlocked configuration that would remain non-compact indefinitely.}
    \label{fig:deadlockring}
\end{figure}

Aaron Becker had conjectured at Dagstuhl Seminar 18331~\cite{Berman2019-dagstuhlprogmat} that symmetry could also lead to livelock, a second type of negative result for the Gauci et al.\ algorithm.
In particular, Becker conjectured that robots initially organized in a cycle (e.g., \figtext~\ref{fig:symmetry:config} for $n = 3$) would traverse a ``symmetric dance'' in perpetuity without converging to an aggregated state when using controller $x^*$.
However, simulations do not support this conjecture.
\figtext~\ref{fig:symmetry:evo} shows that while swarms of various sizes initialized in the symmetric cycle configuration do exhibit an oscillatory behavior, they always reach and remain near the minimum dispersion value indicating near-optimal aggregation.
Interestingly, these unique initial conditions cause small swarms to reach and remain in an oscillatory cycle where they touch and move apart infinitely often.
Larger swarms break symmetry through collisions once the robots touch.

\begin{figure}
    \centering
    \begin{subfigure}{.4\textwidth}
        \centering
        \includegraphics[width=\textwidth]{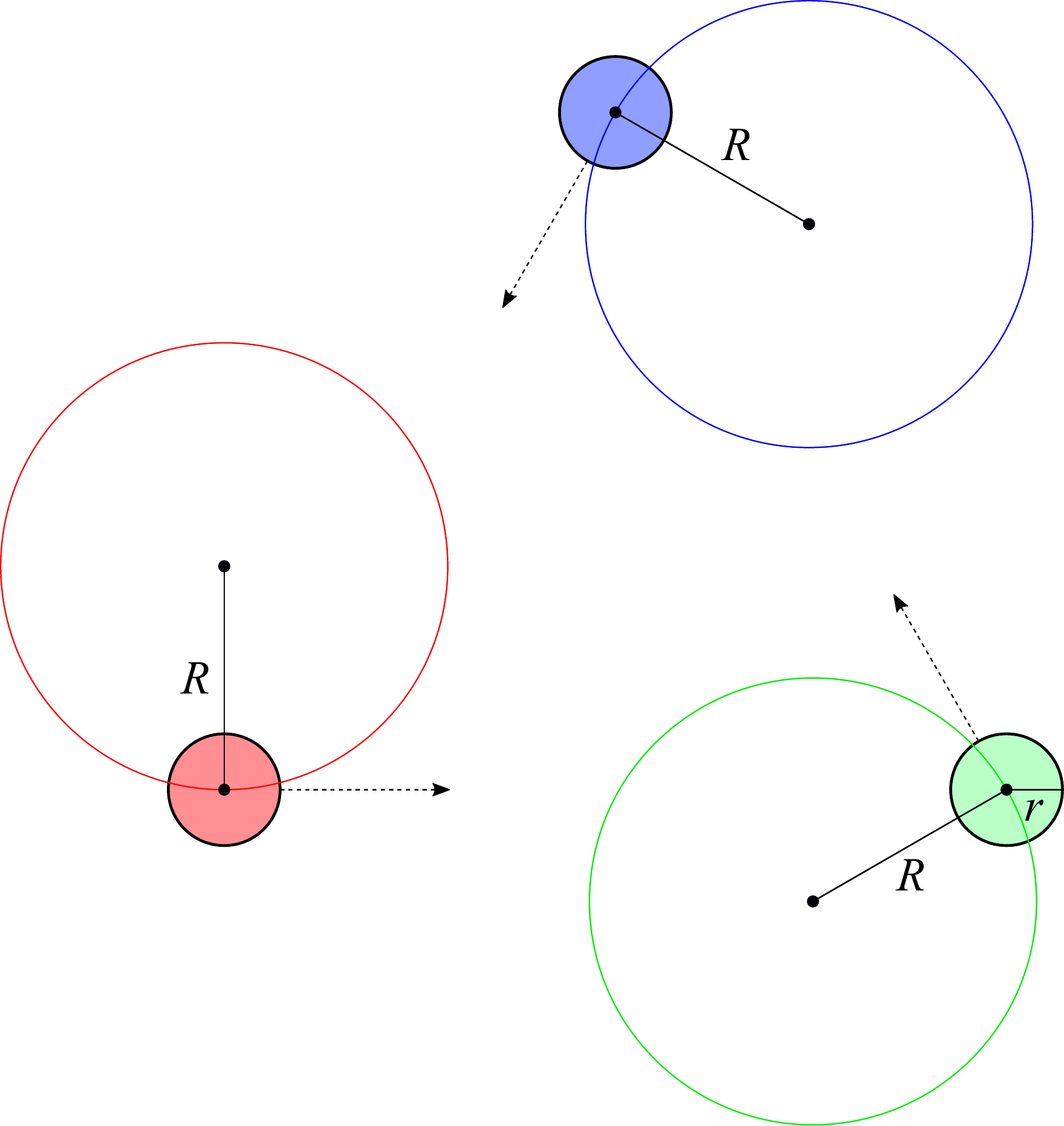}
        \caption{\centering}
        \label{fig:symmetry:config}
    \end{subfigure}
    \hfill
    \begin{subfigure}{.58\textwidth}
        \centering
        \includegraphics[width=\textwidth]{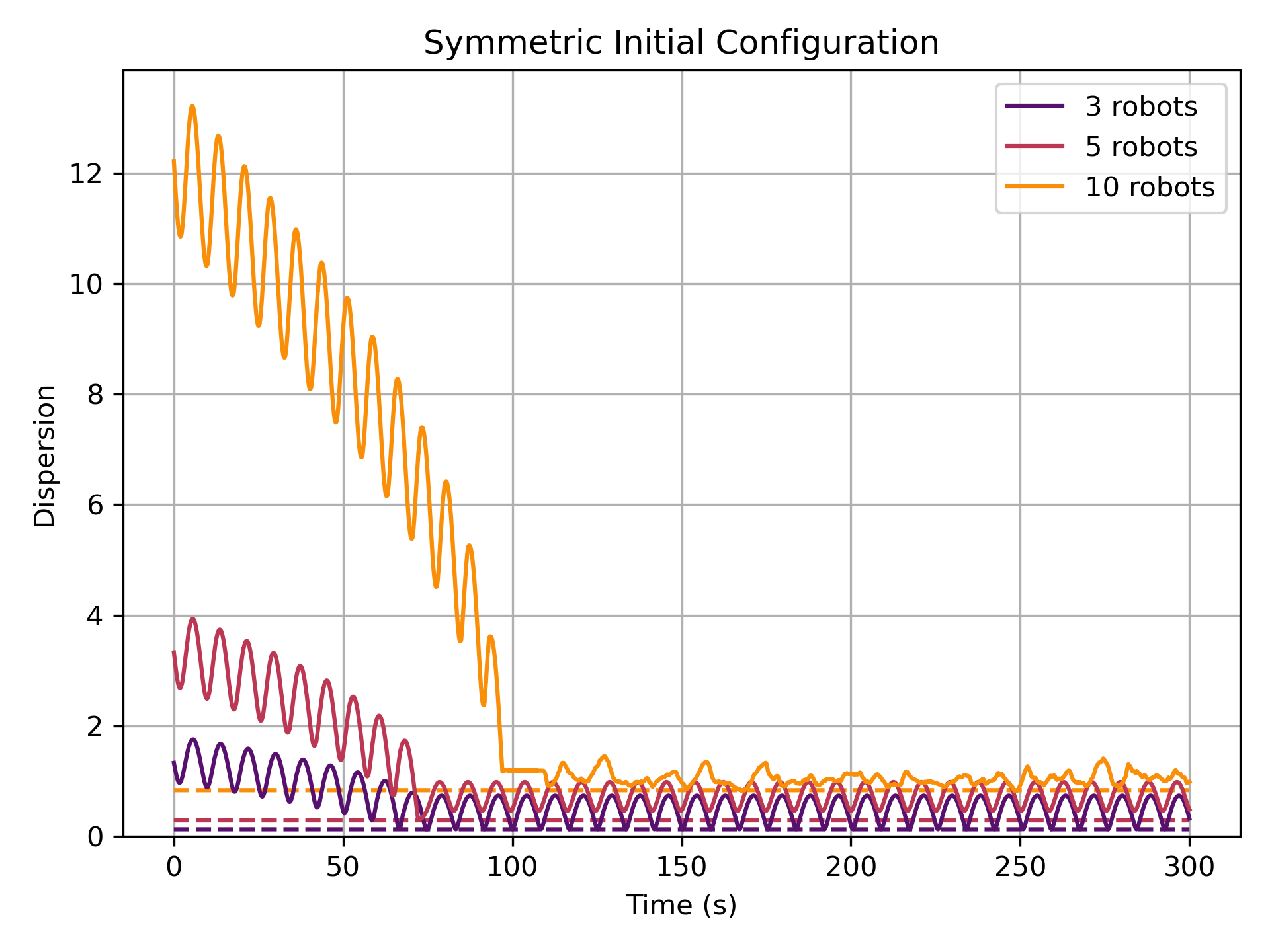}
        \caption{\centering}
        \label{fig:symmetry:evo}
    \end{subfigure}
    \caption{\textbf{\textsf{(a)}} An example symmetric configuration of $n = 3$ robots that was conjectured to produce livelock.
    Reproduced from a private communication from Aaron Becker.
    \textbf{\textsf{(b)}} Dispersion over time for swarms of $n = 3$ (purple), $n = 5$ (magenta), and $n = 10$ (orange) robots with symmetric initial configurations analogous to that of \figtext~\ref{fig:symmetry:config}.
    Dashed lines show the theoretical minimum dispersion value (hexagonal packing) for the given system size.}
    \label{fig:symmetry}
\end{figure}

\section{Robustness to Error and Noise} \label{sec:robust}

Motivated by the role of collisions and perturbations in freeing swarms from potential deadlocks, we next investigate the algorithm's \textit{robustness} to varying magnitudes of error and noise.
Our simulation platform models robots as circular rigid bodies in two dimensions, capturing all translation, rotation, and collision forces acting on the robots.
Forces are combined and integrated iteratively over $5$ ms time steps to obtain the translation and rotation of each robot.
\figtext~\ref{fig:timeevols} shows each of the four aggregation metrics for a baseline run on a swarm of $n = 100$ robots with no explicitly added noise.
All four metrics demonstrate the system’s steady but non-monotonic progress towards aggregation.
Smallest enclosing disc circumference, convex hull perimeter, and dispersion show qualitatively similar progressions while the cluster fraction highlights when individual connected components join together.

\begin{figure}
    \centering
    \begin{subfigure}{.49\textwidth}
        \centering
        \includegraphics[width=\textwidth]{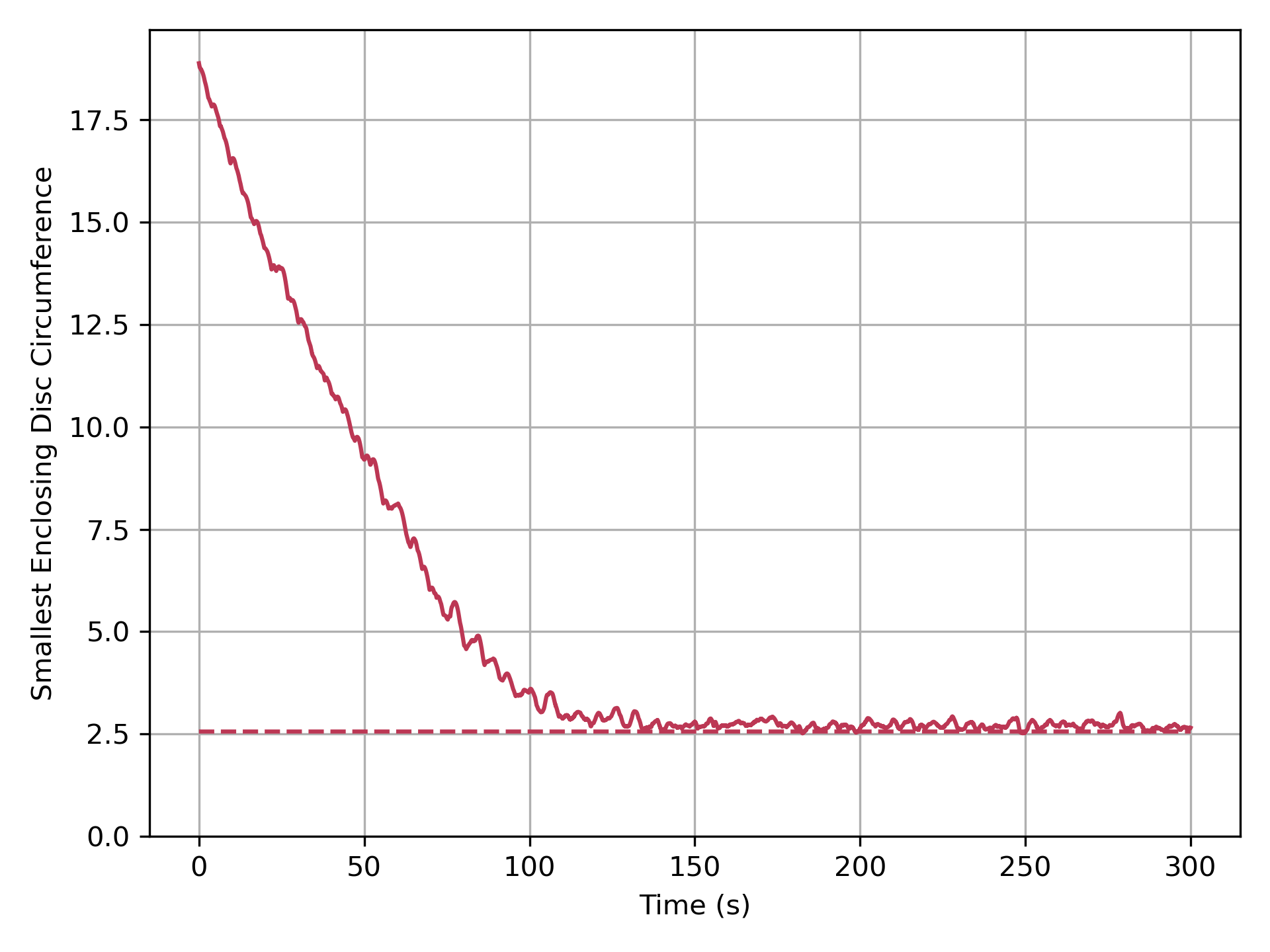}
        \caption{\centering Smallest Enclosing Disc Circumference}
        \label{fig:timeevols:disc}
    \end{subfigure}
    \hfill
    \begin{subfigure}{.49\textwidth}
        \centering
        \includegraphics[width=\textwidth]{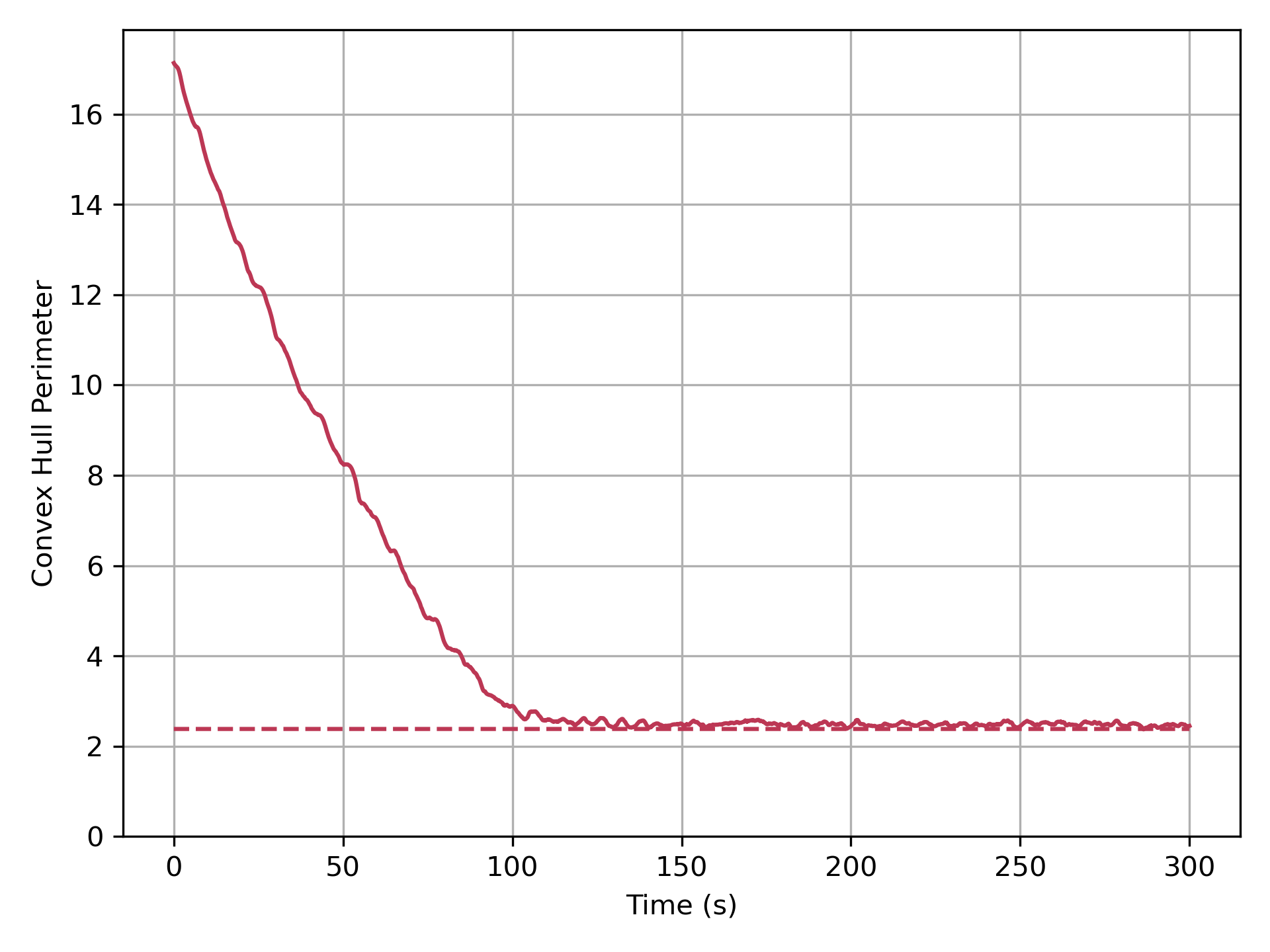}
        \caption{\centering Convex Hull Perimeter}
        \label{fig:timeevols:convex}
    \end{subfigure}\\ \medskip
    \begin{subfigure}{.49\textwidth}
        \centering
        \includegraphics[width=\textwidth]{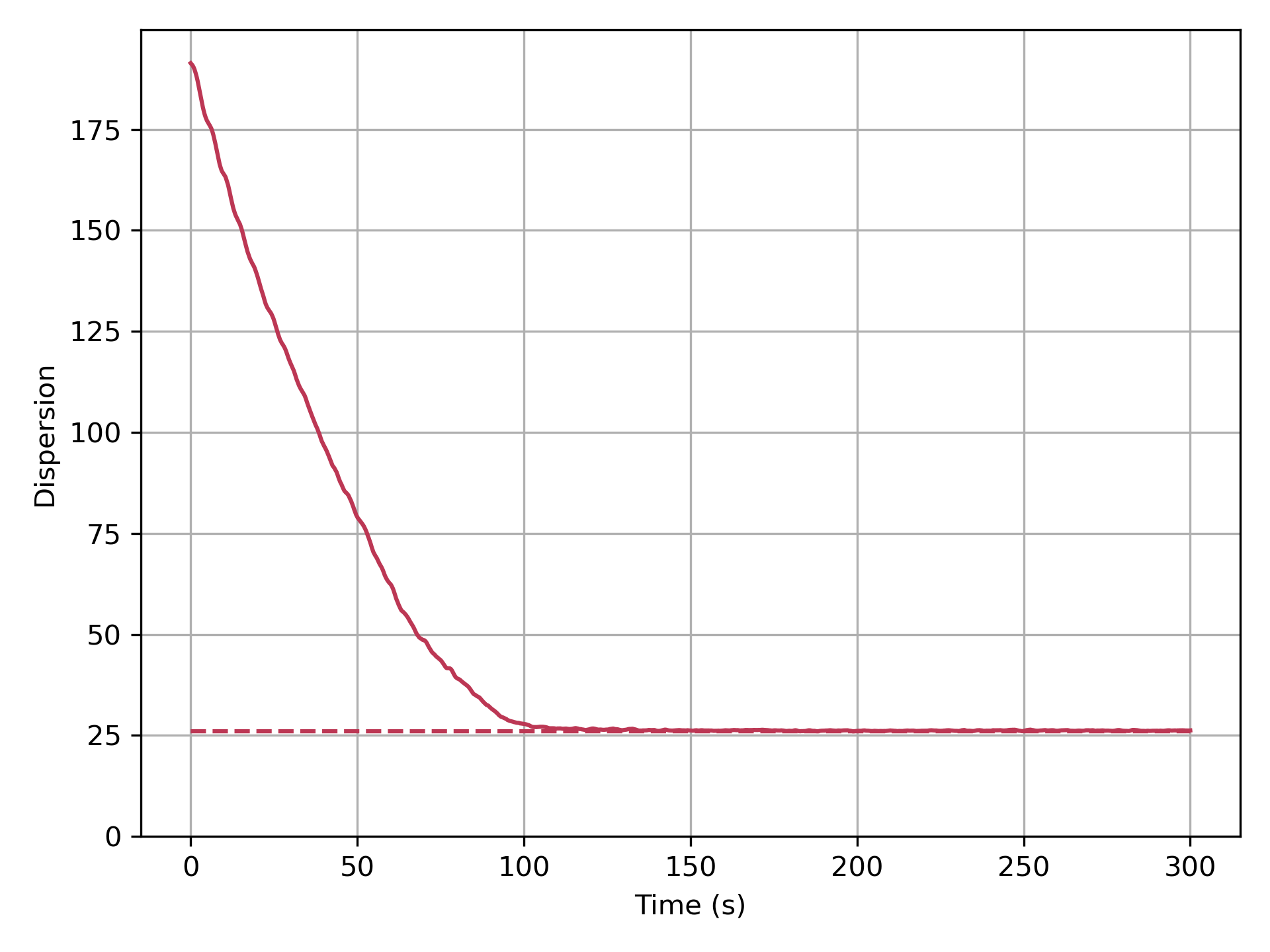}
        \caption{\centering Dispersion}
        \label{fig:timeevols:dispersion}
    \end{subfigure}
    \hfill
    \begin{subfigure}{.49\textwidth}
        \centering
        \includegraphics[width=\textwidth]{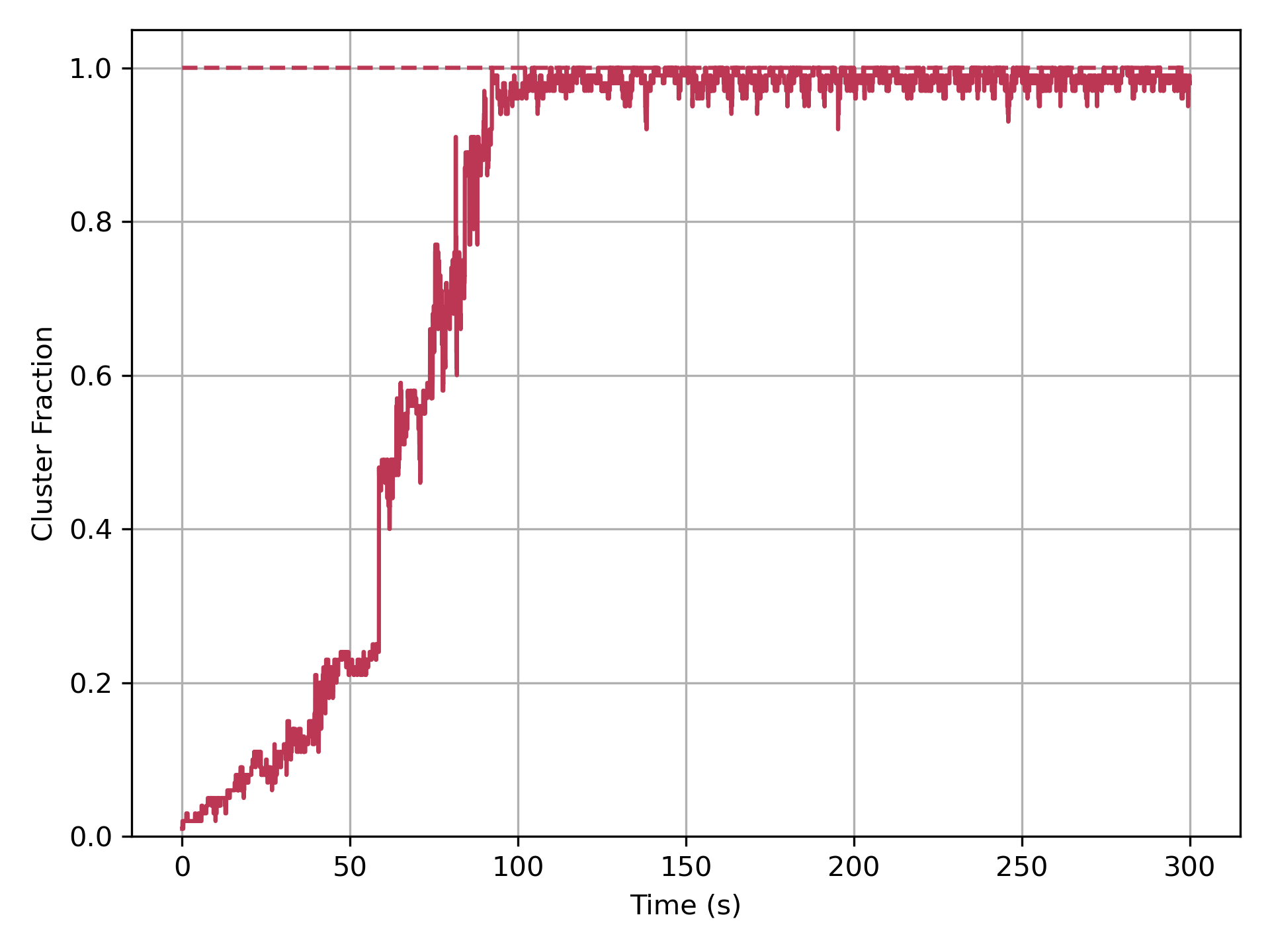}
        \caption{\centering Cluster Fraction}
        \label{fig:timeevols:cluster}
    \end{subfigure}
    \caption{Time evolutions of the four aggregation metrics for the same execution of $x^*$ by a system of $n = 100$ robots for 300 seconds with no explicitly added noise.
    Dashed lines indicate the optimal value for each aggregation metric given the number of robots $n$.}
    \label{fig:timeevols}
\end{figure}

We study the effects of two different forms of noise: \textit{motion noise} and \textit{error probability}.
For motion noise, each robot at each time step experiences an applied force of a random magnitude in $[0, m^*]$ in a random direction.
The parameter $m^*$ defines the maximum noise force (in newtons) that can be applied to a robot in a single time step.
For error probability, each robot has the same probability $p \in [0,1]$ of receiving the incorrect feedback from its sight sensor at each time step; more formally, a robot will receive the correct feedback $I$ with probability $(1-p)$ and the opposite, incorrect feedback $1 - I$ with probability $p$.\footnote{Our formulation of an ``error probability'' $p$ is equivalent to ``sensory noise'' in~\cite{Gauci2014-aggregation} when the false positive and false negative probabilities are both equal to $p$.}
The robot then proceeds with the algorithm as usual based on the feedback it receives.

\begin{figure}
    \centering
    \begin{subfigure}{.49\textwidth}
        \centering
        \includegraphics[width=\textwidth]{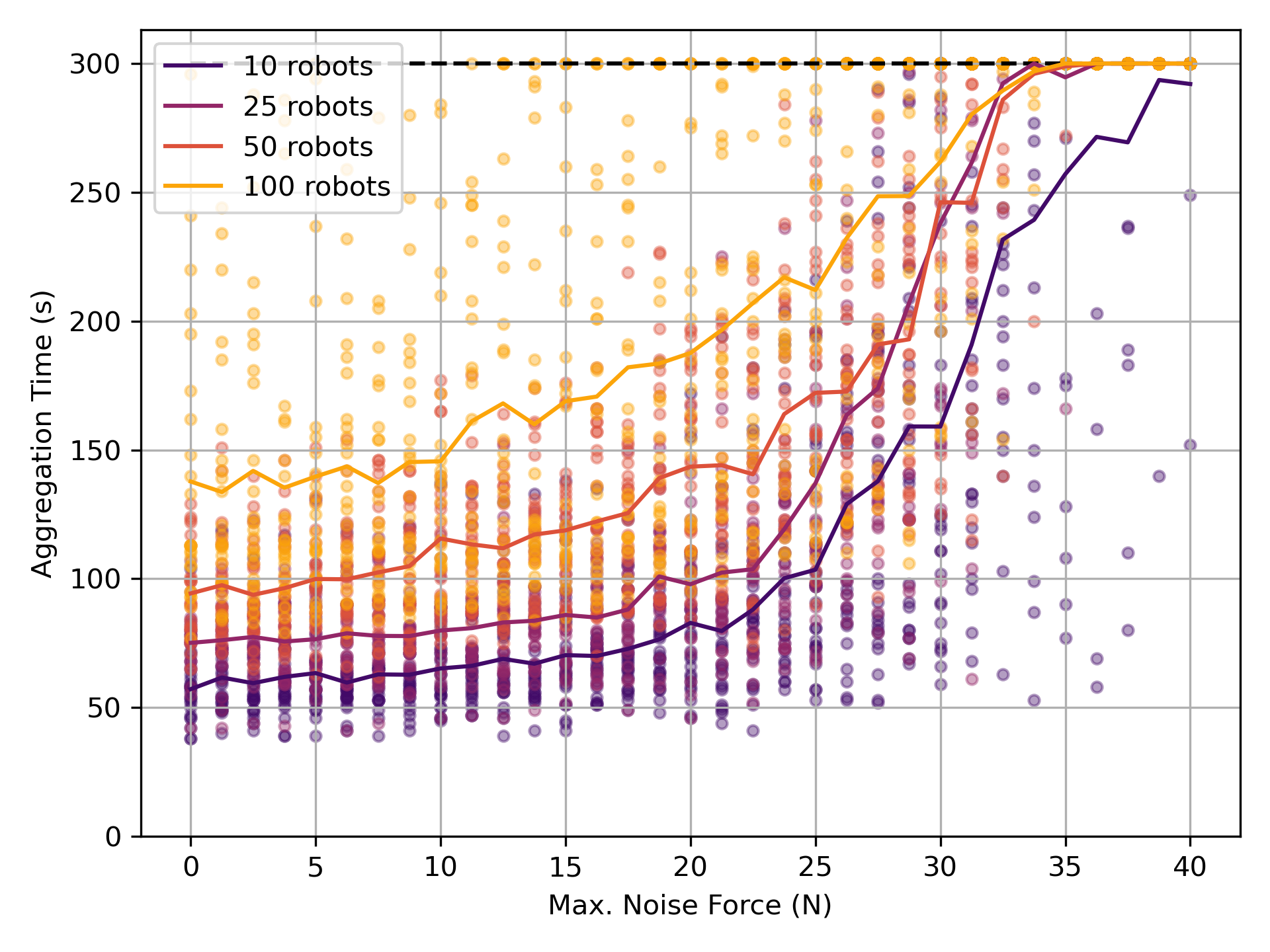}
        \caption{\centering Motion Noise}
        \label{fig:noise-motion}
    \end{subfigure}
    \hfill
    \begin{subfigure}{.49\textwidth}
        \centering
        \includegraphics[width=\textwidth]{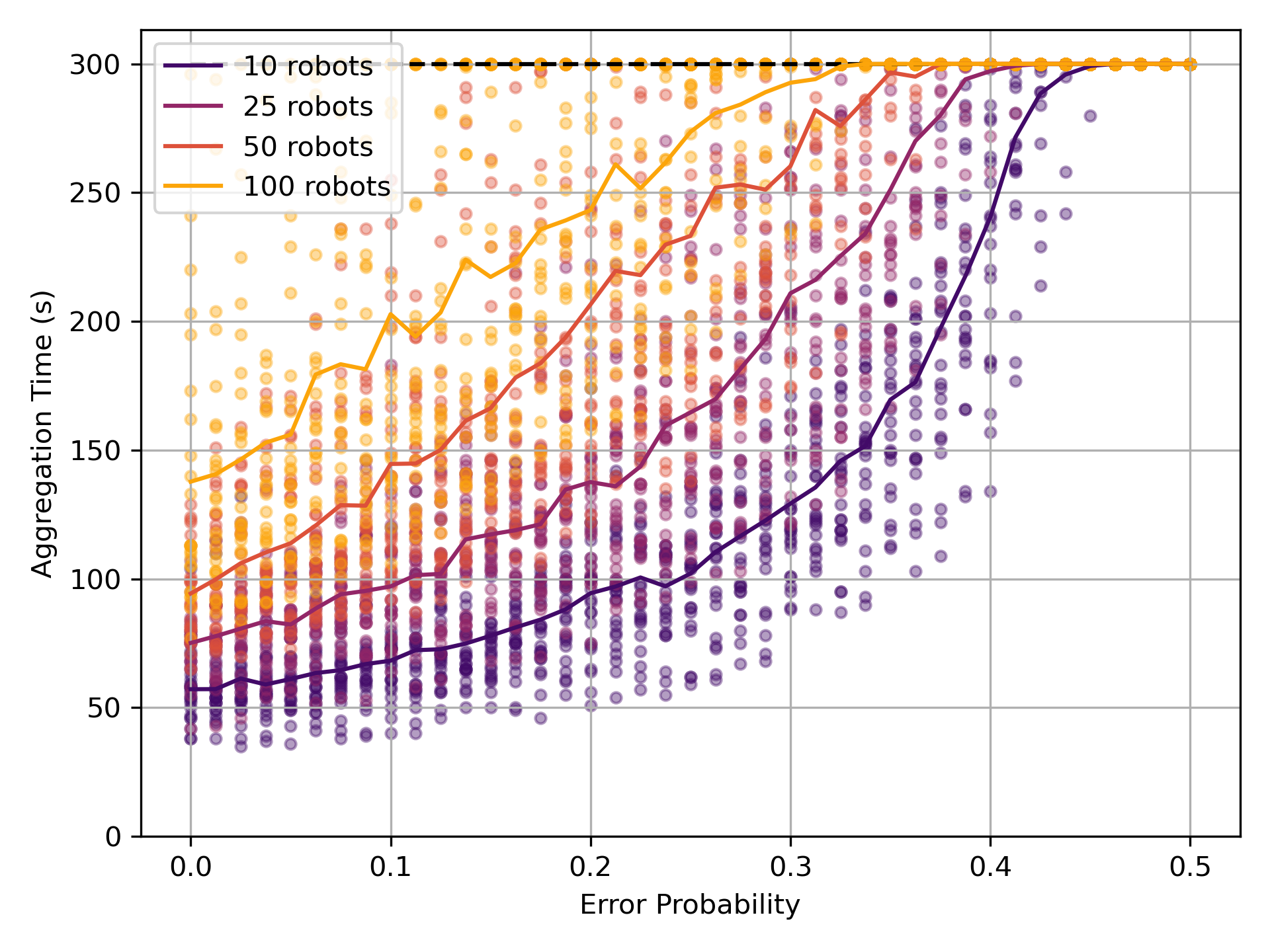}
        \caption{\centering Error Probability}
        \label{fig:noise-errprob}
    \end{subfigure}
    \caption{The time required to reach aggregation for different magnitudes of \textbf{\textsf{(a)}} motion noise and \textbf{\textsf{(b)}} error probability for systems of $n = 10$ (purple), $n = 25$ (magenta), $n = 50$ (red), and $n = 100$ (orange) robots.
    Each experiment for a given system size and noise strength was repeated 25 times (scatter plot); average runtimes are shown as solid lines.
    We consider systems that are within $15\%$ of the minimum dispersion value as aggregated.
    The dashed line at $300$ seconds indicates the simulation runtime cutoff at which the run is determined to be non-aggregating.}
    \label{fig:noise-contin}
\end{figure}

In general, as the magnitude of error increases, so does the time required to achieve aggregation.
The algorithm exhibits robustness to low magnitudes of motion noise with the average time to aggregation remaining relatively steady for $m^* \leq 5$ N and increasing only minimally for $5 \leq m^* \leq 20$ N (\figtext~\ref{fig:noise-motion}).
With larger magnitudes of motion noise ($m^* > 20$ N), average time to aggregation increases significantly, with many runs reaching the limit for simulation time before aggregation is reached.
A similar trend is evident for error probability (\figtext~\ref{fig:noise-errprob}).
The algorithm exhibits robustness for small error probabilities $p \in [0, 0.05]$ with the average time to aggregation rising steadily with increased error until nearly all runs reach the simulation time limit.
Intuitively, while small amounts of noise can help the algorithm overcome deadlock without degrading performance, too much noise interferes significantly with the algorithm's ability to progress towards aggregation.

\section{Using a Cone-of-Sight Sensor} \label{sec:cone}

We next analyze a generalization of the algorithm where each robot has a \textit{cone-of-sight sensor} of size $\beta$ instead of a line-of-sight sensor ($\beta = 0$).
This was left as future work in~\cite{Gauci2014-aggregation} and was briefly considered in Gauci's PhD dissertation~\cite{Gauci2014-phdthesis} where, for each $\beta \in \{0^\circ, 30^\circ, \ldots, 180^\circ\}$, the best performing controller $x_\beta$ was found and compared against the others.
Here we take a complementary approach, studying the performance of the original controller $x^*$ as $\beta$ varies.

We begin by proving that, in the case of one static robot and one robot executing the generalized algorithm, using a cone-of-sight sensor with size $\beta > 0$ can improve the time to aggregation by a linear factor over the original algorithm.
Intuitively, a robot with a cone-of-sight sensor will rotate in place longer than its line-of-sight counterpart, consequently moving its center of rotation a greater distance towards the other robot with each revolution.

\begin{figure}
    \centering
    \includegraphics[width=0.52\textwidth]{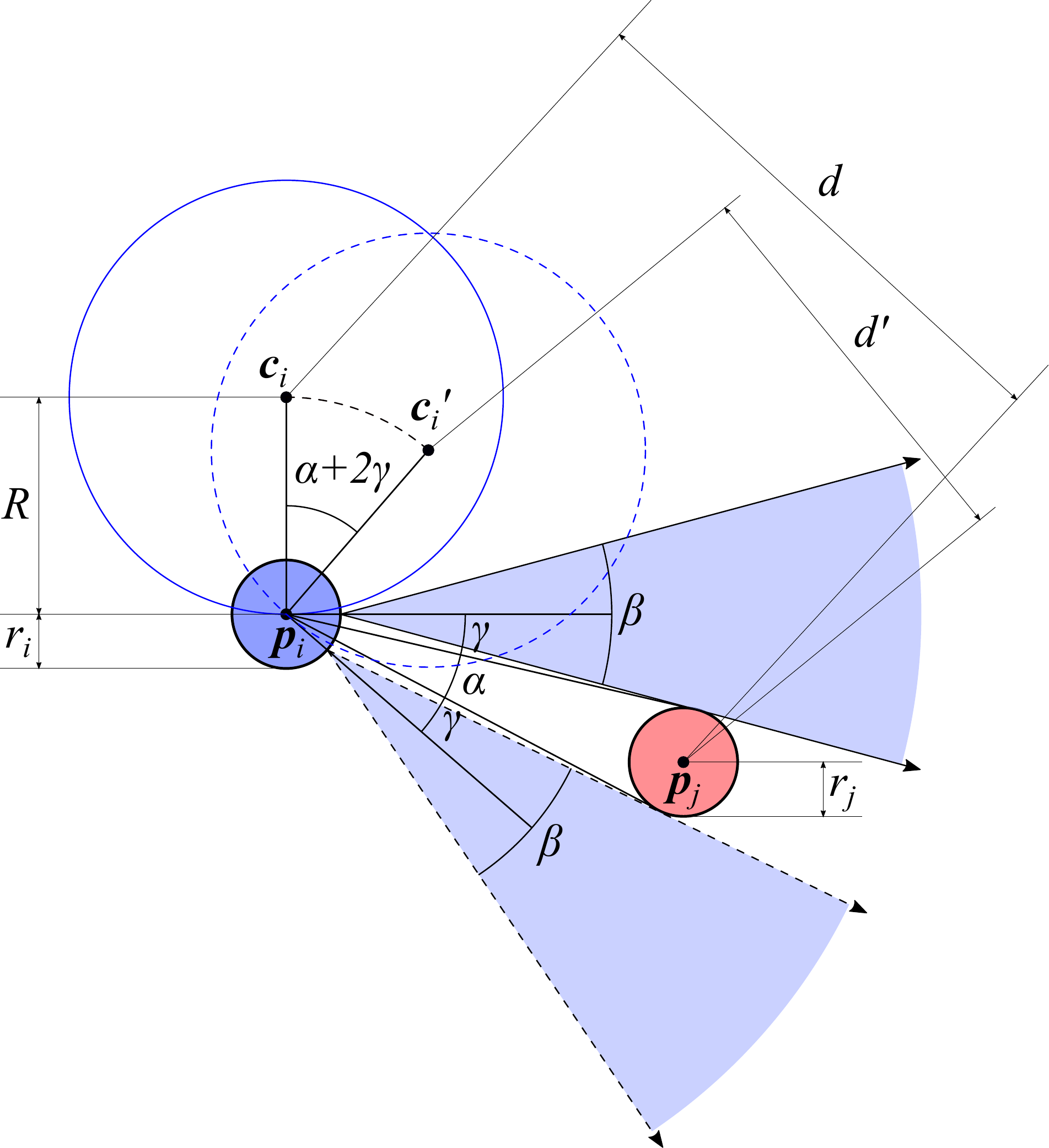}
    \caption{The setup considered in the proof of Theorem~\ref{thm:coneofsight}.
    Robot $i$ is moving and has a cone-of-sight sensor with size $\beta$ while robot $j$ is static.}
    \label{fig:cone}
\end{figure}

\begin{theorem} \label{thm:coneofsight}
    One moving robot using a cone-of-sight sensor of size $\beta \in (0, \pi)$ will always aggregate with another static robot in
    \[m < \left\lceil\frac{(d_0 - R - r_i - r_j)(R + 2r_i)}{2\sqrt{3}Rr_i\sin((1 - 1/\sqrt{3})\cdot\beta/2)}\right\rceil\]
    rotations around its center of rotation, where $d_0$ is the initial value of $||\boldsymbol{p}_j - \boldsymbol{c}_i||$.\footnote{The initial ideas behind this proof were formulated at Dagstuhl Seminar 18331~\cite{Berman2019-dagstuhlprogmat} in collaboration with Roderich Gro\ss\ and Aaron Becker.}
\end{theorem}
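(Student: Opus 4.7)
The plan is to track the center of rotation $\boldsymbol{c}_i$ across successive revolutions, showing that each revolution decreases $\|\boldsymbol{p}_j - \boldsymbol{c}_i\|$ by at least some quantifiable amount $\Delta$, then dividing the initial slack $d_0 - (R + r_i + r_j)$ by $\Delta$ to bound the number of revolutions.

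First I would partition each revolution into two phases: an \emph{invisible} phase, in which the sensor cone misses $j$ and robot $i$ orbits its fixed center $\boldsymbol{c}_i$ (so $\boldsymbol{c}_i$ is stationary); and a \emph{visible} phase, in which the cone intersects the disc of $j$ and robot $i$ rotates in place about the fixed $\boldsymbol{p}_i$, so the rigidly attached $\boldsymbol{c}_i$ sweeps along an arc around $\boldsymbol{p}_i$. A direct complex-number computation then shows that across the visible phase, $\boldsymbol{c}_i$ undergoes a chord displacement of magnitude $2R\sin(\Delta\phi/2)$ parallel to the direction from $\boldsymbol{p}_i$ to $\boldsymbol{p}_j$, where $\Delta\phi$ is the total angular sweep of the sensor while $j$ remains visible; the symmetry of the cone about the midpoint of the sweep is what fixes the chord direction exactly. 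This chord direction differs from the $\boldsymbol{c}_i \to \boldsymbol{p}_j$ direction by an angle that is fully determined by $R$ and $\|\boldsymbol{p}_j - \boldsymbol{p}_i\|$, so the useful projection onto the $\boldsymbol{p}_j - \boldsymbol{c}_i$ axis is readily computed once the triangle $\boldsymbol{p}_i\boldsymbol{c}_i\boldsymbol{p}_j$ is known.

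The next step is to bound $\Delta\phi$ from below. The cone sees $j$ whenever the line from $\boldsymbol{p}_i$ to $\boldsymbol{p}_j$ lies within $\beta/2 + \gamma$ of the sensor axis, where $\gamma$ is the angular half-width of $j$'s disc as viewed from $\boldsymbol{p}_i$; this gives $\Delta\phi = \beta + 2\gamma$, and the worst case for $\gamma$ corresponds to $\boldsymbol{p}_i$ sitting as far as possible from $\boldsymbol{p}_j$ during the visible phase, constrained to the orbit of radius $R$ about $\boldsymbol{c}_i$. I expect the constants $(R + 2r_i)$ in the denominator and $(1 - 1/\sqrt{3})$ inside the sine to emerge from this extremal sub-problem: $(R + 2r_i)$ from the geometry linking the sensor's attachment point on the body of radius $r_i$ to the effective orbit radius about $\boldsymbol{c}_i$, and the $\sqrt{3}$-based constant from a $30$--$60$--$90$-like configuration formed when the sight ray from $\boldsymbol{p}_i$ is tangent to the disc of $j$ in the critical barely-detecting arrangement.

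Finally, aggregation occurs as soon as $\|\boldsymbol{p}_j - \boldsymbol{c}_i\| \leq R + r_i + r_j$, because then the orbit of $\boldsymbol{p}_i$ (radius $R$ about $\boldsymbol{c}_i$) passes within $r_i + r_j$ of $\boldsymbol{p}_j$ and the next invisible-phase sweep forces contact. Combining the per-revolution progress $\Delta$ with the initial slack $d_0 - R - r_i - r_j$ yields the stated $\lceil \cdot \rceil$ bound. The hard part will be the worst-case geometric analysis pinning down $\Delta$ precisely: verifying (i) that the extremal visible-phase configuration is indeed the critical tangency described above and not some other local extremum as $\boldsymbol{p}_i$ varies on its orbit, and (ii) that after projecting the chord onto $\boldsymbol{p}_j - \boldsymbol{c}_i$ the uniform lower bound takes the exact form appearing in the denominator of the theorem. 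Everything else reduces to bookkeeping around this extremal computation.
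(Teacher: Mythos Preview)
Your high-level plan matches the paper exactly: track $d_m = \|\boldsymbol{p}_j - \boldsymbol{c}_i\|$ across revolutions, lower-bound the per-revolution decrease, and divide. Your observation that the visible-phase displacement $\boldsymbol{c}_i' - \boldsymbol{c}_i$ is a chord of length $2R\sin(\Delta\phi/2)$ directed exactly along $\boldsymbol{p}_i \to \boldsymbol{p}_j$ is correct and is implicitly what the paper's coordinate computation yields. The paper, however, works with the exact identity $d'^2 = d^2 - 4Rr_j\sin(\alpha/2+\gamma)/\sin(\alpha/2)$ for the squared distance rather than projecting the chord, which sidesteps your step (ii) entirely.

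The concrete gap is your visibility formula $\Delta\phi = \beta + 2\gamma$. This would be right if the cone emanated from the center $\boldsymbol{p}_i$, but in the paper's model the sensor sits on the robot's boundary at $\boldsymbol{p}_i + r_i\hat{n}$. Consequently the effective half-angle of the cone \emph{as measured from} $\boldsymbol{p}_i$ is some $\gamma' < \beta/2$ (the paper's $\gamma$), and the true sweep is $\Delta\phi = \alpha + 2\gamma'$, strictly less than your $\alpha + \beta$. The factor $(1-1/\sqrt{3})$ is precisely the lower bound $\gamma' \geq (1-1/\sqrt{3})\,\beta/2$ for this boundary-sensor correction, and its worst case is when the two robots are \emph{touching}, not far apart. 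If you proceed with $\Delta\phi = \beta + 2\gamma$ you will over-count the sweep and prove a bound that is too strong.

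Your guesses about the provenance of the constants are also off. The $(R+2r_i)$ does not come from the sensor attachment; it is the termination threshold $d_m > R + r_i + r_j = R + 2r_i$ (using $r_i = r_j$), used to convert the $(d_m - R)$ that appears after bounding $\sin(\alpha/2) < r_j/(d_m-R)$ into a constant fraction of $d_m$. The $2\sqrt{3}$ in the denominator comes from $\cos(\alpha/2)\geq \sqrt{3}/2$ via $\alpha\leq\pi/3$, again a touching-configuration bound. So both extremals you need are at minimum separation, not maximum, and neither emerges from the angular half-width of $j$ alone.
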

\begin{proof}
    Consider a robot $i$ executing the generalized algorithm at position $\boldsymbol{p}_i$ with center of rotation $\boldsymbol{c}_i$ and a static robot $j$ at position $\boldsymbol{p}_j$.
    As in the proofs of Theorems 5.1 and 5.2 in~\cite{Gauci2014-aggregation}, we first consider the scenario shown in \figtext~\ref{fig:cone} and derive an expression for $d' = ||\boldsymbol{p}_j - \boldsymbol{c}_i'||$ in terms of $d = ||\boldsymbol{p}_j - \boldsymbol{c}_i||$.
    W.l.o.g., let $\boldsymbol{c}_i = [0, 0]^T$ and let the axis of the cone-of-sight sensor of robot $i$ point horizontally right at the moment it starts seeing robot $j$.
    Then the position of robot $j$ is given by
    \[\boldsymbol{p}_j = \left[\begin{array}{c}
        \frac{r_j\cos(\alpha/2 + \gamma)}{\sin(\alpha/2)} \\
        -\left(R + \frac{r_j\sin(\alpha/2 + \gamma)}{\sin(\alpha/2)}\right)
    \end{array}\right].\]
    Substituting this position into the distance $d = ||\boldsymbol{p}_j - \boldsymbol{c}_i||$ yields
    \[d^2 = \left(\frac{r_j\cos(\alpha/2 + \gamma)}{\sin(\alpha/2)}\right)^2 + \left(R + \frac{r_j\sin(\alpha/2 + \gamma)}{\sin(\alpha/2)}\right)^2 = R^2 + \frac{2Rr_j\sin(\alpha/2 + \gamma)}{\sin(\alpha/2)} + \frac{r_j^2}{\sin^2(\alpha/2)}.\]
    Using a line-of-sight sensor, robot $i$ would only rotate $\alpha$ before it no longer sees robot $j$; however, with a cone-of-sight sensor of size $\beta$, robot $i$ rotates $\alpha + 2\gamma$ before robot $j$ leaves its sight, where $\gamma$ is the angle from the cone-of-sight axis to the first line intersecting $\boldsymbol{p}_i$ that is tangent to robot $j$.
    With this cone-of-sight sensor, $\boldsymbol{c}_i'$ is given by
    \[\boldsymbol{c}_i' = \left[\begin{array}{c}
        R\sin(\alpha + 2\gamma) \\
        R(\cos(\alpha + 2\gamma) - 1)
    \end{array}\right].\]
    Substituting this new center of rotation into the distance $d' = ||\boldsymbol{p}_j - \boldsymbol{c}_i'||$ yields
    \begin{align*}
        d' &= \sqrt{
        \begin{aligned}
        &\left(\frac{r_j\cos(\alpha/2 + \gamma)}{\sin(\alpha/2)} - R\sin(\alpha + 2\gamma)\right)^2 \\
        &+ \left(-\left(R + \frac{r_j\sin(\alpha/2 + \gamma)}{\sin(\alpha/2)}\right) - R(\cos(\alpha + 2\gamma) - 1)\right)^2
        \end{aligned}
        } \\
        &= \sqrt{
        \begin{aligned}
        &\frac{r_j^2\cos^2(\alpha/2 + \gamma)}{\sin^2(\alpha/2)} - \frac{2Rr_j\cos(\alpha/2 + \gamma)\sin(\alpha + 2\gamma)}{\sin(\alpha/2)} + R^2\sin^2(\alpha + 2\gamma) \\
        &+ \frac{r_j^2\sin^2(\alpha/2 + \gamma)}{\sin^2(\alpha/2)} + \frac{2Rr_j\sin(\alpha/2 + \gamma)\cos(\alpha + 2\gamma)}{\sin(\alpha/2)} + R^2\cos^2(\alpha + 2\gamma)
        \end{aligned}
        } \\
        &= \sqrt{R^2 + \frac{r_j^2}{\sin^2(\alpha/2)} + \frac{2Rr_j(\sin(\alpha/2 + \gamma)\cos(\alpha + 2\gamma) - \cos(\alpha/2 + \gamma)\sin(\alpha + 2\gamma))}{\sin(\alpha/2)}} \\
        &= \sqrt{d^2 + \frac{2Rr_j(\sin(\alpha/2 + \gamma - (\alpha + 2\gamma)) - \sin(\alpha/2 + \gamma))}{\sin(\alpha/2)}} \\
        &= \sqrt{d^2 - \frac{4Rr_j\sin\left(\alpha/2 + \gamma\right)}{\sin(\alpha/2)}}.
    \end{align*}
    Note that this relation contains the result proven in Theorem 5.1 of~\cite{Gauci2014-aggregation} as a special case by setting $\beta = 0$ (and thus $\gamma = 0$), which corresponds to a line-of-sight sensor.
    To bound the number of $d \to d'$ updates required until $d \leq R + r_i + r_j$ (i.e., until the robots have aggregated), we write the following recurrence relation, where $\hat{d}_m = d_m^2$ and $\hat{d}_m > (R + r_i + r_j)^2$:
    \[\hat{d}_{m+1} = \hat{d}_{m} - \frac{4Rr_j\sin\left(\alpha/2 + \gamma\right)}{\sin(\alpha/2)}.\]
    Observe that $\alpha$ is the largest when the two robots are touching, and --- assuming $r_i = r_j$, i.e., the two robots are the same size --- it is easy to see that $\alpha \leq \pi/3$.
    Also, $\gamma$ is at least $0$ and at most $\beta / 2$; thus, by supposition, $\gamma < \pi/2$.
    Thus, by the angle sum identity,
    \[\hat{d}_{m+1} < \hat{d}_m - \frac{4Rr_j\cos(\alpha/2)\sin(\gamma)}{\sin(\alpha/2)}.\]
    Again, since $\alpha \leq \pi/3$, we have $\cos(\alpha / 2) \geq \sqrt{3}/2$.
    By inspection, we also have $\sin(\alpha/2) < r_j/(d_m - R)$, yielding
    \[\hat{d}_{m+1} < \hat{d}_{m} - 2\sqrt{3}R(d_m - R)\sin(\gamma).\]
    
    Let $k_1 > 1$ be a constant such that $r_i = r_j = R/k_1$, which must exist since $r_i$, $r_j$, and $R$ are constants and $r_i = r_j < R$.
    Since the robots have not yet aggregated, we have $d_m > R + r_i + r_j = (1 + 2/k_1)R$.
    We use this to show
    \[R < \frac{d_m}{1 + 2/k_1} = k_2d_m,\]
    where $k_2 = 1/(1 + 2/k_1) < 1$ is a constant.
    Returning to our recurrence relation:
    \[\hat{d}_{m+1} < \hat{d}_m - 2\sqrt{3}R(d_m - k_2d_m)\sin(\gamma) = \hat{d}_m - 2\sqrt{3}Rd_m(1 - k_2)\sin(\gamma)\]
    Recalling that $\hat{d}_m = d_m^2$, we have
    \[d_{m+1} < \sqrt{\hat{d}_m - 2\sqrt{3}Rd_m(1 - k_2)\sin(\gamma)} < d_m - \sqrt{3}R(1 - k_2)\sin(\gamma),\]
    where the second inequality can be verified by squaring both sides as follows:
    \begin{align*}
        \hat{d}_m - 2\sqrt{3}Rd_m(1 - k_2)\sin(\gamma) &< \hat{d}_{m} - 2\sqrt{3}Rd_m(1 - k_2)\sin(\gamma) + 3R^2(1-k_2)^2\sin^2(\gamma) \\
        0 &< 3R^2(1-k_2)^2\sin^2(\gamma),
    \end{align*}
    which holds because $R > 0$, $1-k_2 > 0$, and $\gamma \in (0, \pi/2)$.
    
    As a final upper bound on $d_{m+1}$, we lower bound the angle $\gamma$ as a function of the constant size of the cone-of-sight sensor $\beta$ as $\gamma \geq (1 - 1/\sqrt{3})\cdot\beta/2$ (see Appendix~\ref{app:lowerbound}), yielding
    \[d_{m+1} < d_m - \sqrt{3}R(1 - k_2)\sin((1 - 1/\sqrt{3})\cdot\beta/2)\]
    This yields the solution
    \[d_m < d_0 - m\sqrt{3}R(1 - k_2)\sin((1 - 1/\sqrt{3})\cdot\beta/2), \quad d_m > R + r_i + r_j\]
    The number of $d \to d'$ updates required until $d \leq R + r_i + r_j$ is now given by setting $d_m = R + r_i + r_j$ in this solution and solving for $m$, which yields
    \[m < \left\lceil\frac{d_0 - R - r_i - r_j}{\sqrt{3}R(1 - k_2)\sin((1 - 1/\sqrt{3})\cdot\beta/2)}\right\rceil
    = \left\lceil\frac{(d_0 - R - r_i - r_j)(R + 2r_i)}{2\sqrt{3}Rr_i\sin((1 - 1/\sqrt{3})\cdot\beta/2)}\right\rceil.\]
    We note that this bound on the number of required updates $m$ has a linear dependence on $d_0$ while the original bound proven in Theorem 5.2 of~\cite{Gauci2014-aggregation} for line-of-sight sensors depended on $d_0^2$, demonstrating a linear speedup with cone-of-sight sensors.
\end{proof}

\begin{figure}
    \centering
    \includegraphics[width=.65\textwidth]{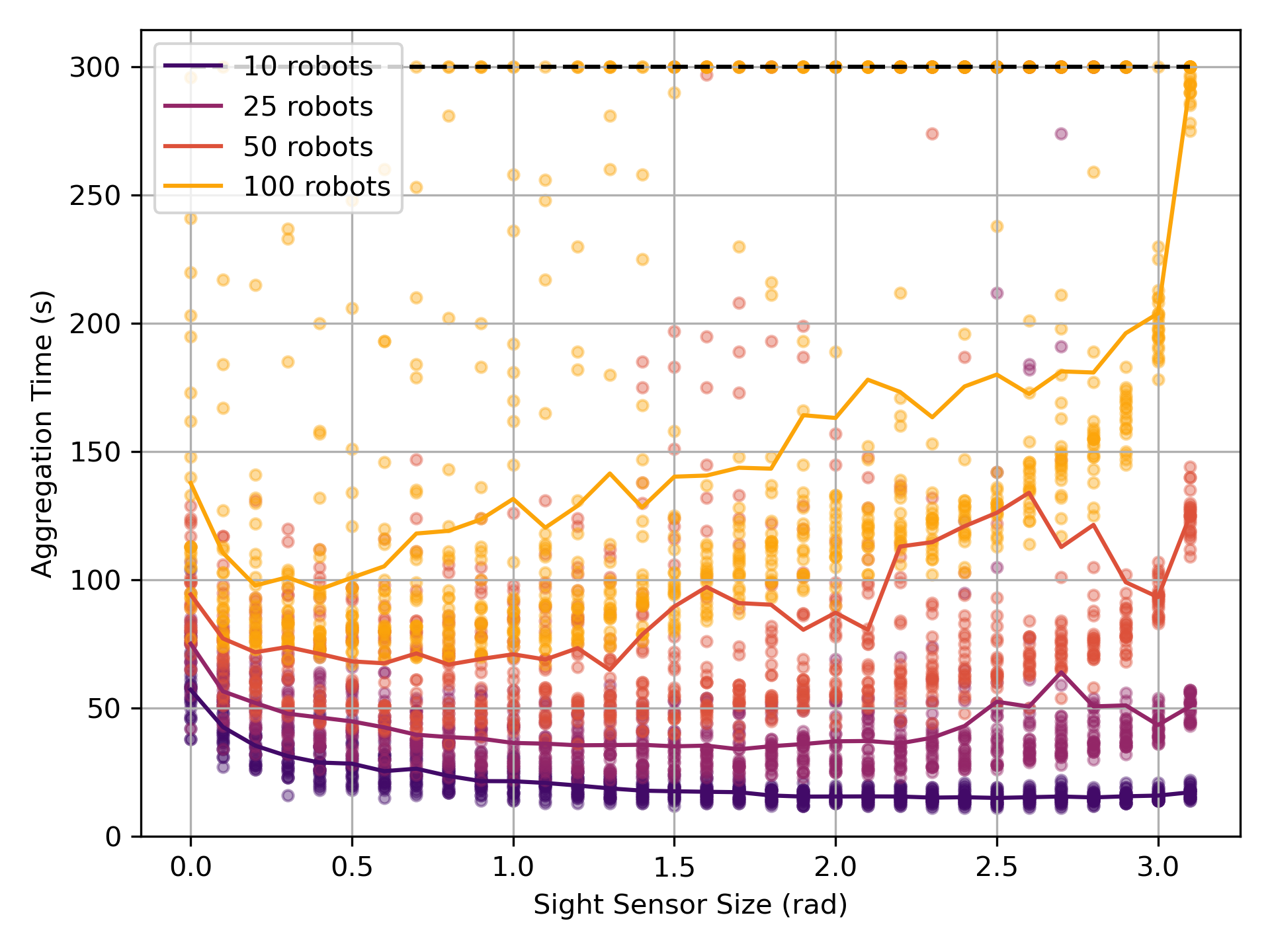}
    \caption{The effects of cone-of-sight sensor size on the algorithm's time to aggregation for systems of $n = 10$ (purple), $n = 25$ (magenta), $n = 50$ (red), and $n = 100$ (orange) robots.
    Each experiment for a given system size and sensor size was repeated 25 times (scatter plot); average runtimes are shown as solid lines.
    We consider systems that are within $15\%$ of the minimum dispersion value as aggregated.
    The dashed line at $300$ seconds indicates the simulation runtime cutoff at which the run is determined to be non-aggregating.}
    \label{fig:contincone}
\end{figure}

This theorem establishes a provable linear speedup for systems of $n = 2$ robots with cone-of-sight sensors over those with line-of-sight sensors.
However, simulation results show that as the number of robots increases, the speedup from using cone-of-sight sensors diminishes (\figtext~\ref{fig:contincone}).
All systems benefit from small cone-of-sight sensors --- i.e., $\beta \in (0, 0.5)$ --- reaching aggregation in significantly less time.
With larger systems, however, large cone-of-sight sensors can be detrimental as robots see others more often than not, causing them to primarily rotate in place without making progress towards aggregation.
This highlights a delicate balance between the algorithm's two modes (rotating around the center of rotation and rotating in place) with $\beta$ indirectly affecting how much time is spent in each.

\section{A Noisy, Discrete Adaptation} \label{sec:discrete}

Recall that our original goal was to prove Conjecture~\ref{conj:naggregate}: that algorithm $x^*$ achieves aggregation for all system sizes $n > 2$.
We proved in Theorem~\ref{thm:deadlock} that this is not always the case due to deadlock configurations.
To investigate under what conditions aggregation can be achieved reliably, we developed a \textit{discrete adaptation} of controller $x^*$ that simplifies the algorithm to its core movement rules.
Because the discrete setting lacks any natural physics, this discrete adaptation requires an \textit{explicit inclusion of noise} in order to escape deadlock.

\begin{figure}
    \centering
    \begin{subfigure}{.32\textwidth}
        \centering
        \includegraphics[width=\textwidth]{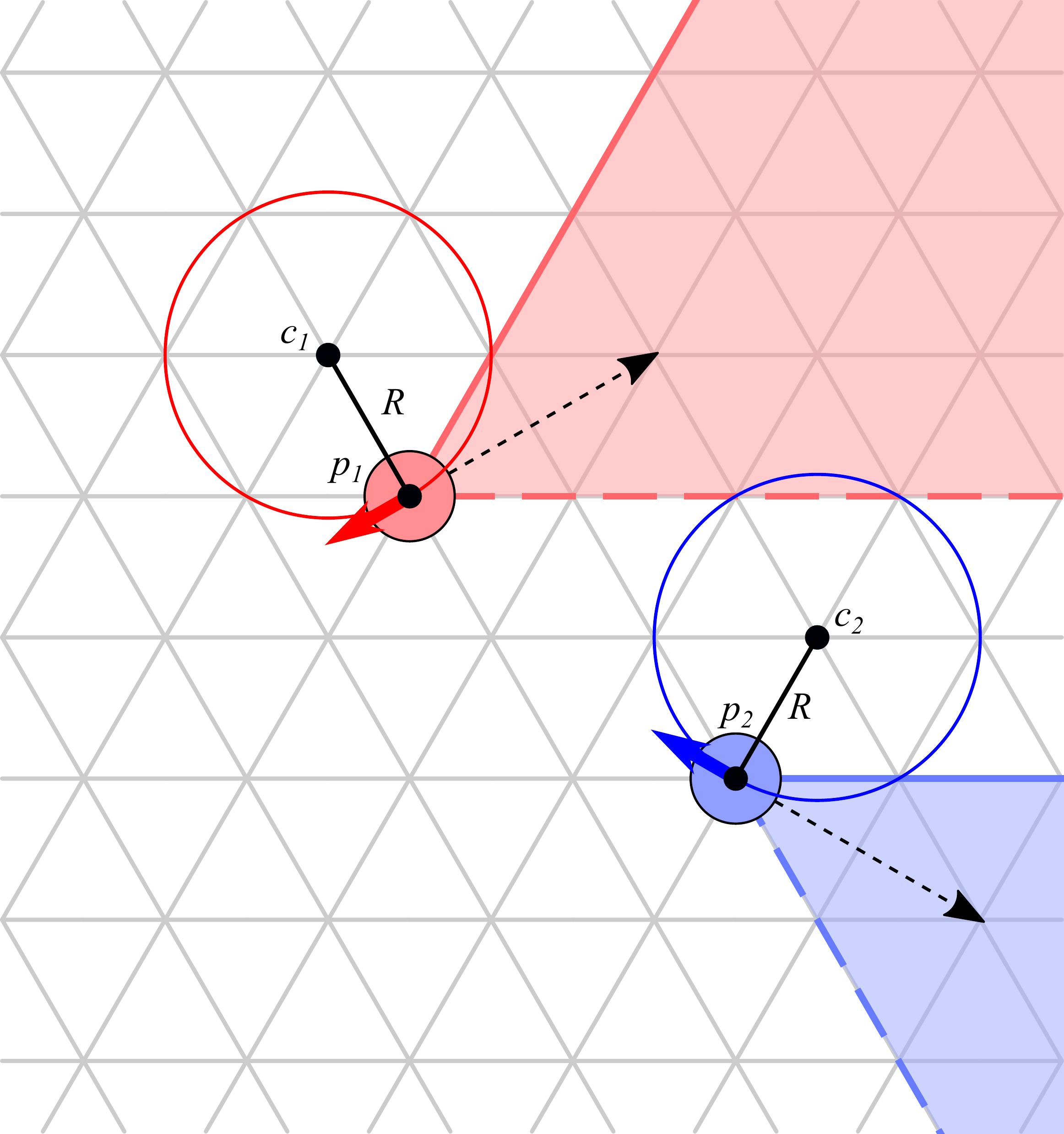}
        \caption{\centering}
        \label{fig:discrete:a}
    \end{subfigure}
    \hfill
    \begin{subfigure}{.32\textwidth}
        \centering
        \includegraphics[width=\textwidth]{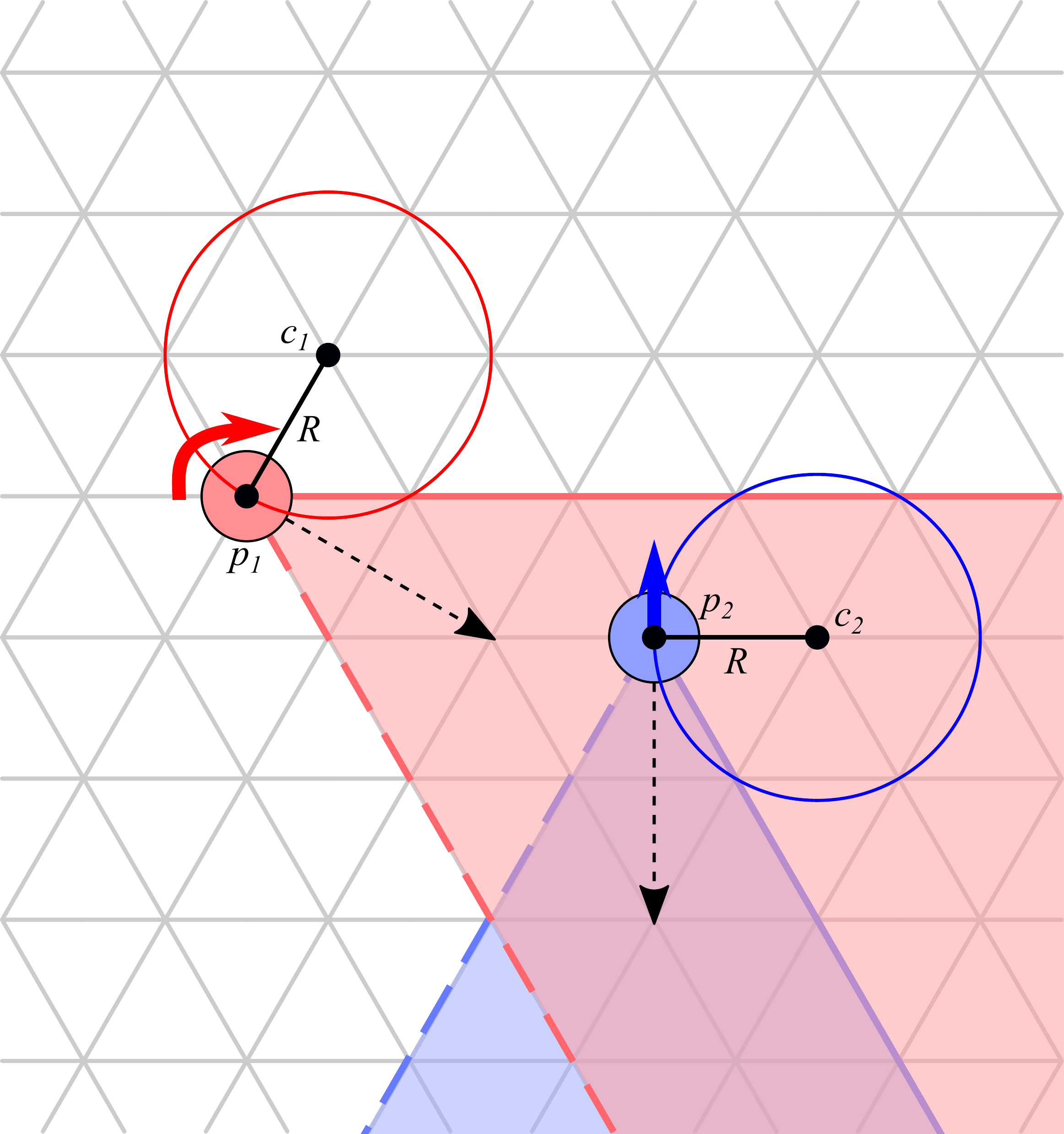}
        \caption{\centering}
        \label{fig:discrete:b}
    \end{subfigure}
    \hfill
    \begin{subfigure}{.32\textwidth}
        \centering
        \includegraphics[width=\textwidth]{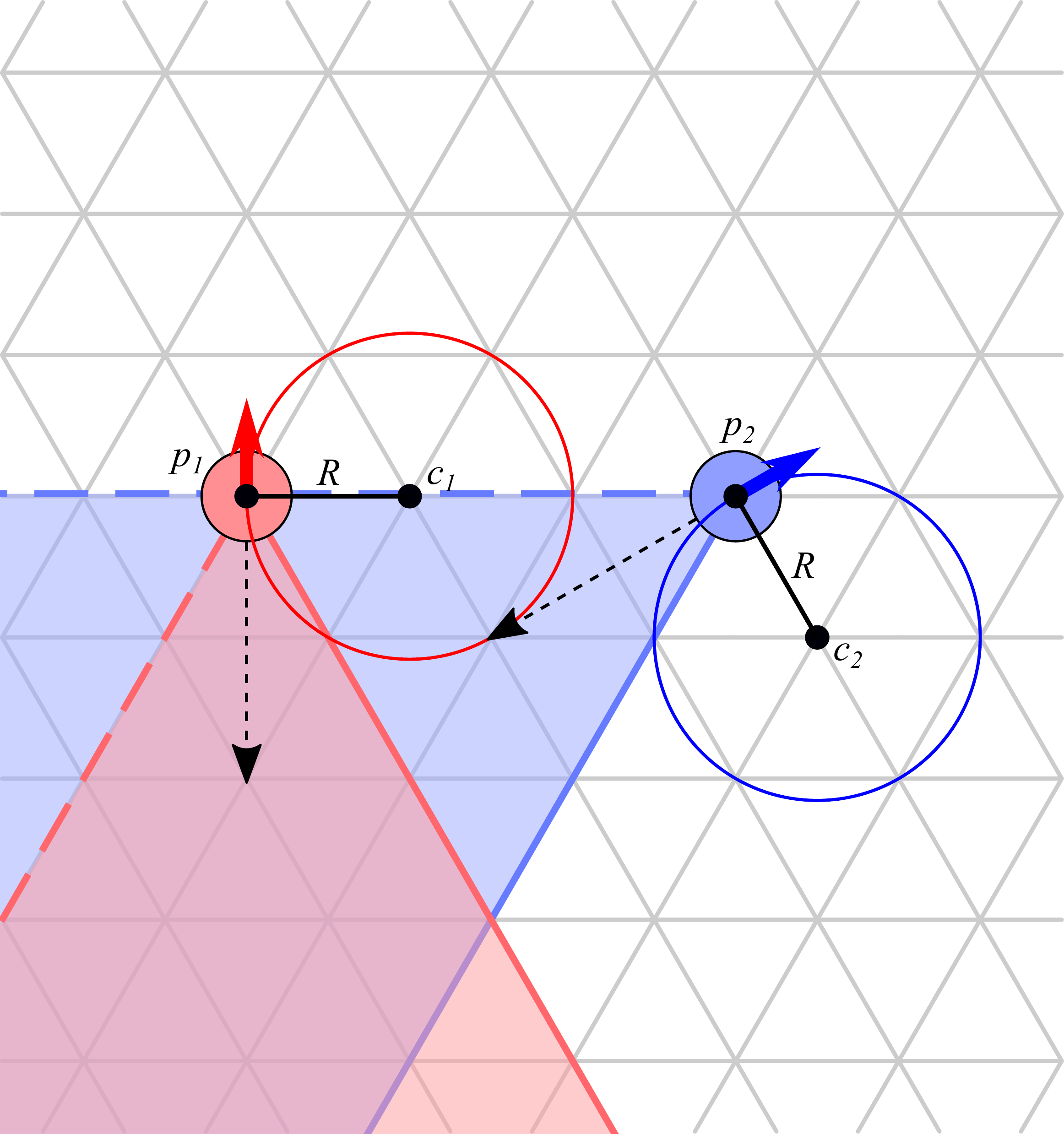}
        \caption{\centering}
        \label{fig:discrete:c}
    \end{subfigure}
    \caption{An example of $n = 2$ robots running the discrete adaptation of $x^*$ (without noise).
    \textbf{\textsf{(a)}} Neither $p_1$ nor $p_2$ see another robot in their cones-of-sight, so they rotate clockwise about their respective centers.
    \textbf{\textsf{(b)}} Robot $p_1$ now sees $p_2$ in its cone-of-sight, so it will rotate clockwise in place; $p_2$ still does not see a robot, so it rotates clockwise about its center.
    \textbf{\textsf{(c)}} Once again, neither robot sees another in their cones-of-sight and will next rotate about their centers. Note that the cones-of-sight are inclusive on their counter-clockwise boundaries sides but exclusive on their clockwise boundaries.}
    \label{fig:discrete}
\end{figure}

In the discrete adaptation, we model space using the triangular lattice $\Gtri = (V, E)$ where the nodes $V$ represent the positions a robot can occupy and the edges $E$ represent connections to adjacent positions (\figtext~\ref{fig:discrete}).
Two robots are \textit{connected} if they occupy nodes $u,v \in V$ such that $(u,v) \in E$.
Motivated by the results in Section~\ref{sec:cone} and restricted by the triangular lattice, we generalize the line-of-sight sensors used in the Gauci et al.\ algorithm~\cite{Gauci2014-aggregation} to \textit{cone-of-sight sensors} capable of seeing a $60^\circ$ section of the lattice.
As in the original algorithm, each robot's center of rotation $c$ is $90^\circ$ counter-clockwise from the direction of its sensor.
At each discrete time step, an arbitrary robot is chosen (e.g., uniformly at random) and performs the steps below (see \figtext~\ref{fig:discrete}).
Discrete time is measured in \textit{rounds}, where a round is complete once each robot has been chosen at least once.

\begin{enumerate}
    \item If no robot is seen in the cone-of-sight and the next clockwise node (say $u$) around $c$ is unoccupied, move to $u$ and adjust the cone's direction $60^\circ$ clockwise.
    \item If a robot is seen in the cone-of-sight, rotate clockwise in place by moving $c$ to the next node clockwise around the robot and adjusting the cone's direction $60^\circ$ clockwise.
    \item Otherwise, do nothing.
\end{enumerate}

In order to overcome deadlock, we consider two different implementations of noise: \textit{error probability} and \textit{deadlock perturbation}.
As in Section~\ref{sec:robust}, both forms of noise allow robots to break symmetry and free themselves from deadlock configurations.
Error probability can be adapted directly from its formulation for the continuous setting (Section~\ref{sec:robust}): each robot has the same probability $p \in [0,1]$ of receiving the incorrect feedback from its cone-of-sight sensor at each step and then proceeds with the algorithm as usual based on the feedback it received.
For deadlock perturbation, each robot is assumed to be able to detect when its rotation around $c$ is blocked by another robot.
Each robot keeps a counter $d \in \mathbb{Z}_{\geq 0}$ and a maximum counter value $d^* > 0$.\footnote{We note that the counters used in deadlock perturbation require the robot to keep persistent state and perform conditional logic, both of which are avoided entirely in the original algorithm.}
The counter remains at $d = 0$ until the robot's rotation becomes blocked.
Then, in each consecutive step that the robot is blocked, $d$ is incremented.
When the robot has been blocked for $d^*$ consecutive steps (indicated by $d = d^*$), the counter is reset to $d = 0$ and the robot performs a perturbation in which --- instead of continuing to attempt to rotate about its center --- it rotates once in place.
This perturbation alters the cone-of-sight's direction so that the robot might possibly be able to break free from deadlock.

\begin{figure}
    \begin{subfigure}{.49\textwidth}
        \centering
        \includegraphics[width=\textwidth]{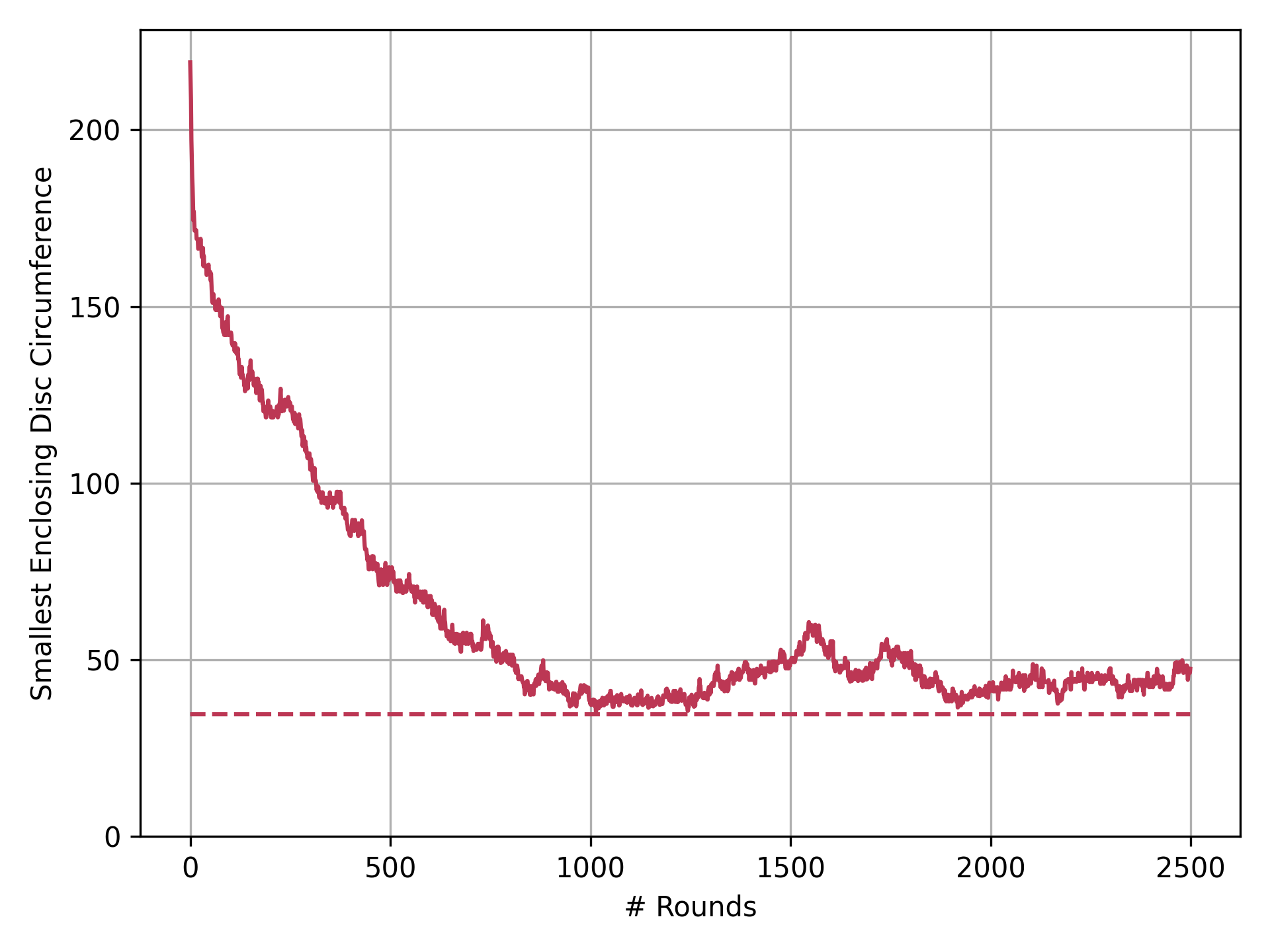}
        \caption{\centering Smallest Enclosing Disc Circumference}
        \label{fig:metrics:disc}
    \end{subfigure}
    \hfill
    \begin{subfigure}{.49\textwidth}
        \centering
        \includegraphics[width=\textwidth]{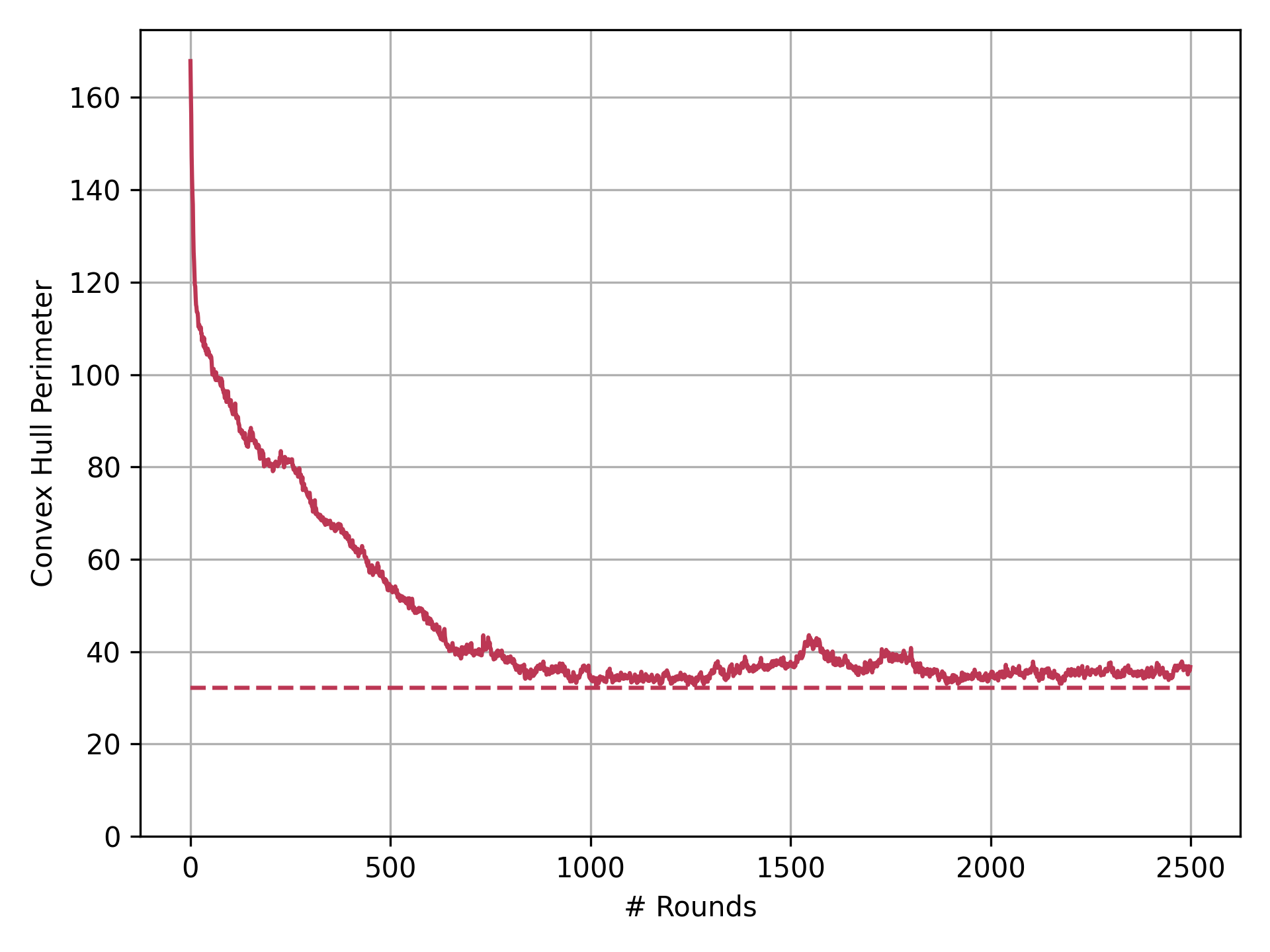}
        \caption{\centering Convex Hull Perimeter}
        \label{fig:metrics:convex}
    \end{subfigure}\\ \medskip
    \begin{subfigure}{.49\textwidth}
        \centering
        \includegraphics[width=\textwidth]{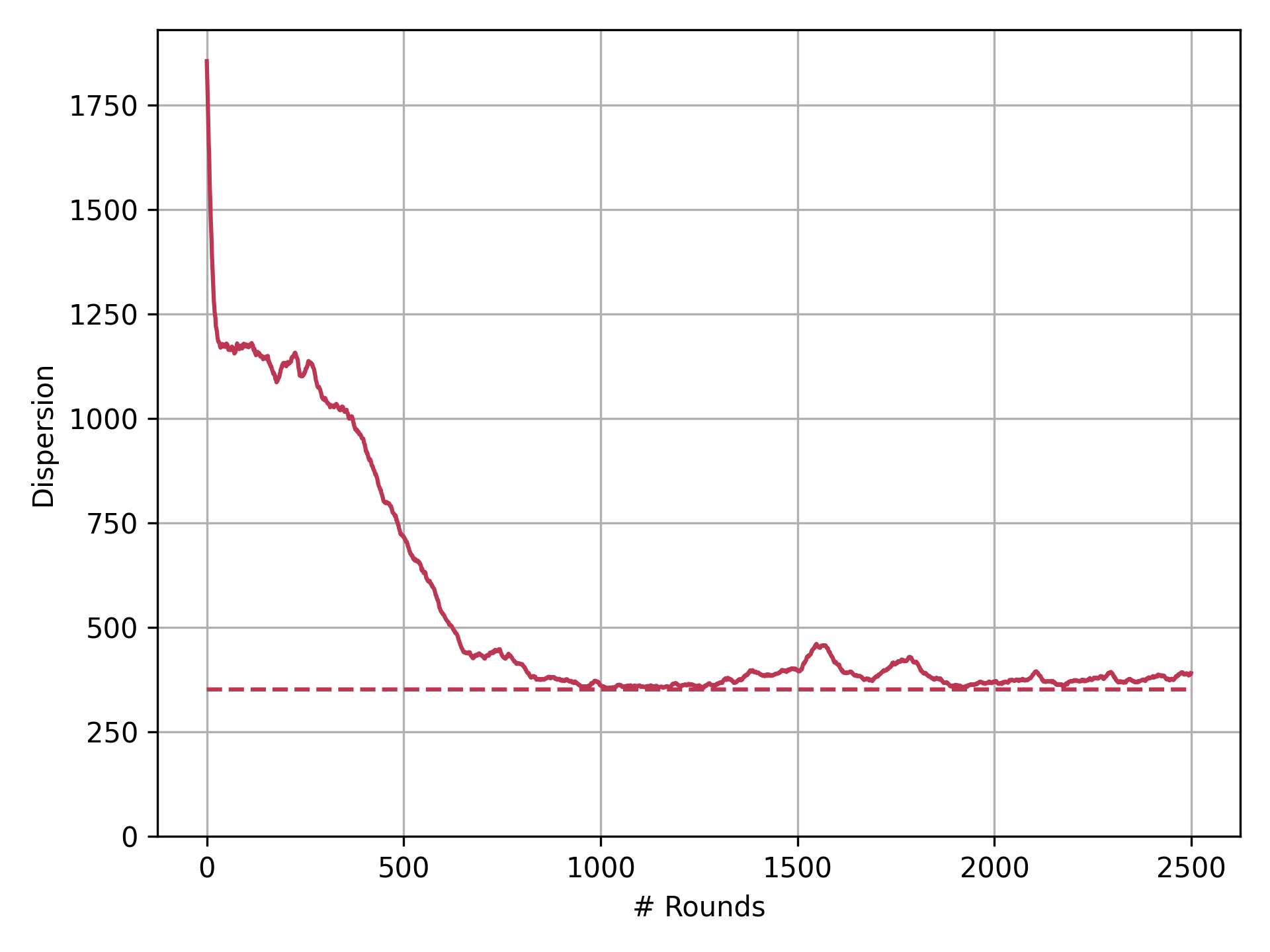}
        \caption{\centering Dispersion}
        \label{fig:metrics:dispersion}
    \end{subfigure}
    \hfill
    \begin{subfigure}{.49\textwidth}
        \centering
        \includegraphics[width=\textwidth]{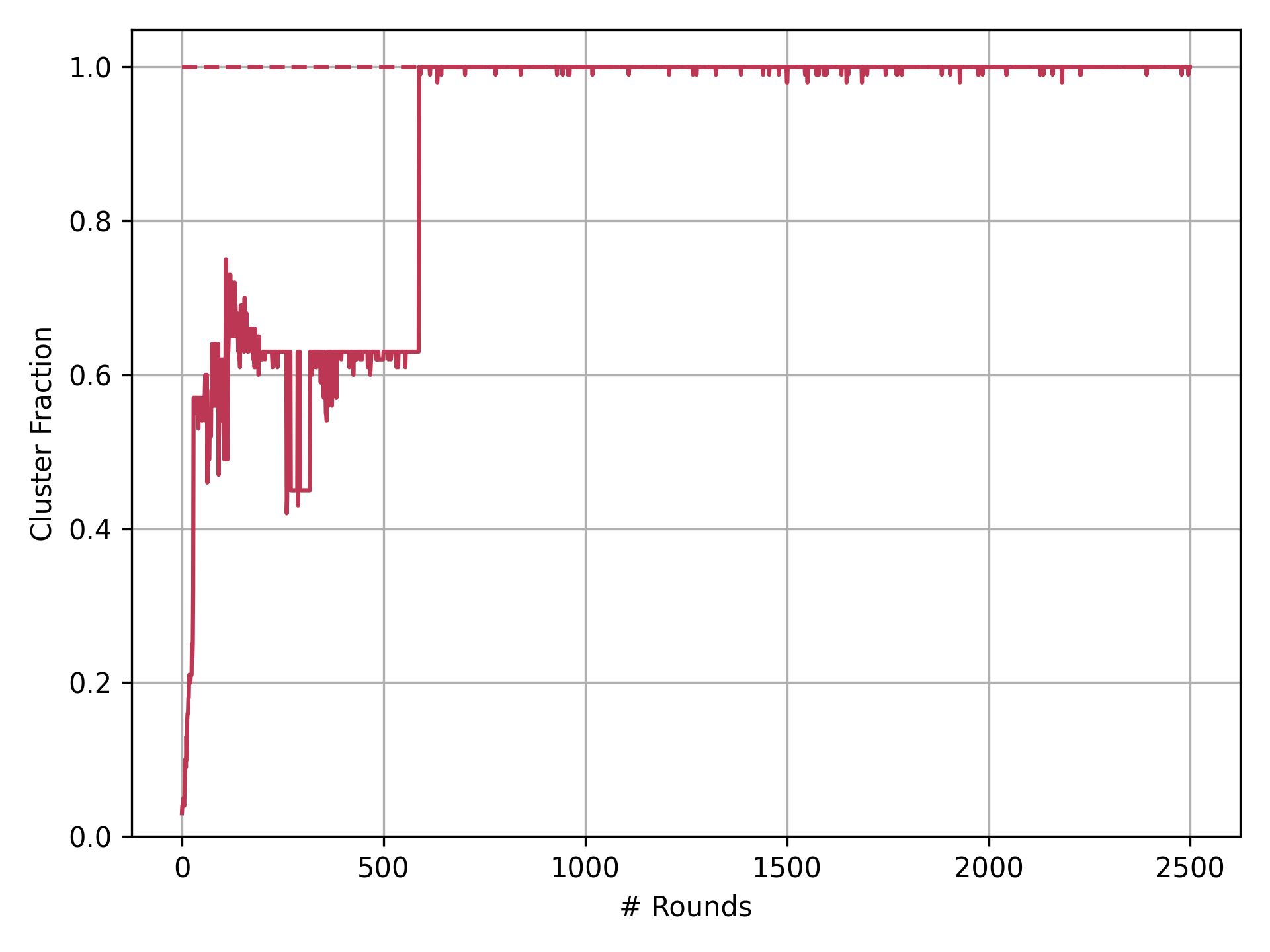}
        \caption{\centering Cluster Fraction}
        \label{fig:metrics:cluster}
    \end{subfigure}
    \caption{Time evolutions of the four aggregation metrics for the same execution of the discrete adaptation by a system of $n = 100$ robots using deadlock perturbation with $d^* = 4$ for 2500 rounds.
    Dashed lines indicate the optimal value for each aggregation metric given the number of robots $n$.}
    \label{fig:metrics}
\end{figure}

Using AmoebotSim~\cite{Daymude2021-amoebotsim}, we simulated this discrete adaptation for a range of noise strengths.
\figtext~\ref{fig:metrics} shows the four aggregation metrics over time for a single run on a system of $n = 100$ robots using deadlock perturbation with $d^* = 4$.
As in the continuous setting (\figtext~\ref{fig:timeevols}), the system progresses steadily but non-monotonically towards aggregation.
However, the discrete adaptation shows significantly more variation, with rapid initial progress towards aggregation followed by slower convergence that includes occasional disaggregation.

\begin{figure}
    \centering
    \begin{subfigure}{.49\textwidth}
        \centering
        \includegraphics[width=\textwidth]{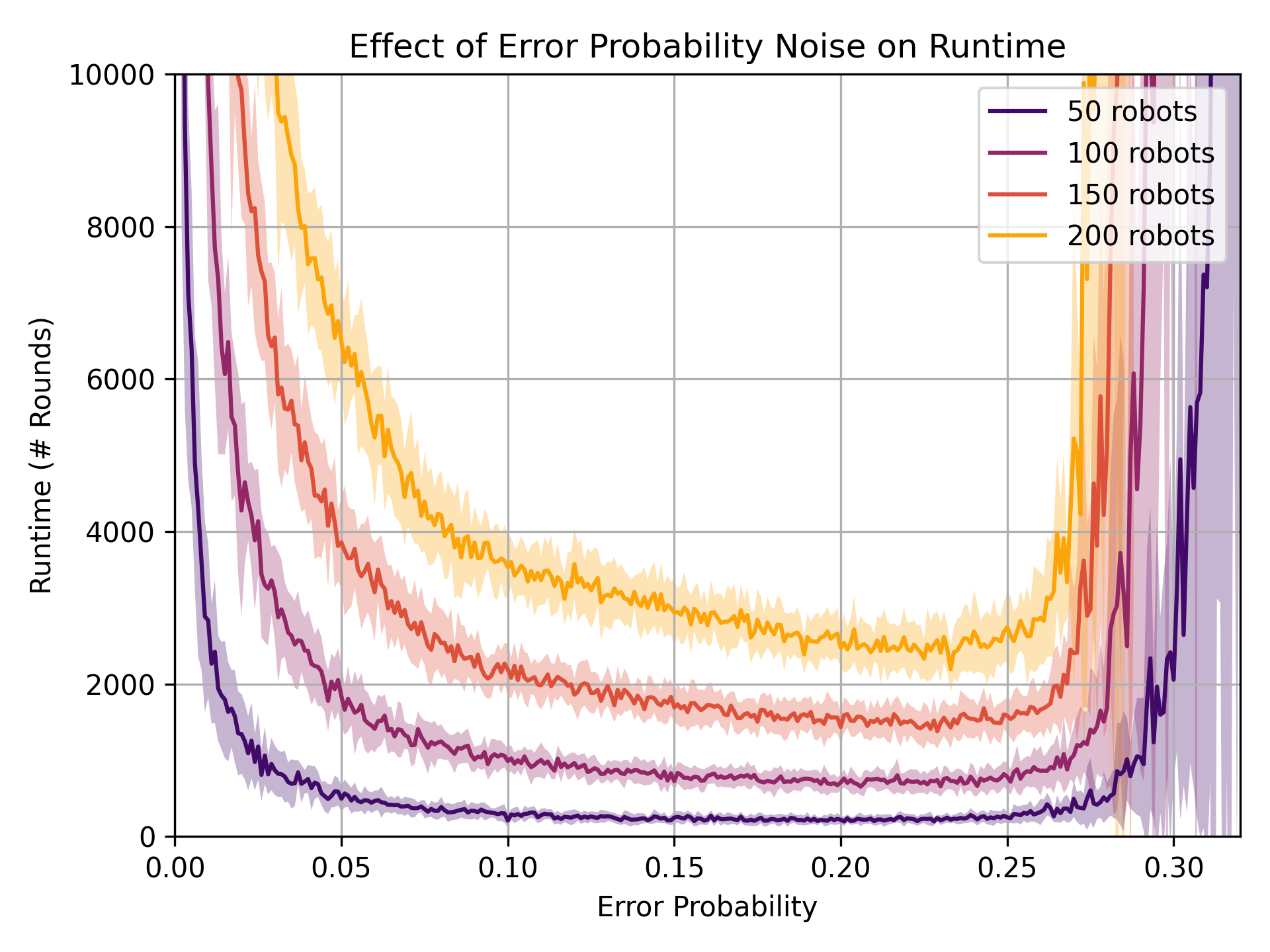}
        \caption{\centering Error Probability}
        \label{fig:noise:errorprob}
    \end{subfigure}
    \hfill
    \begin{subfigure}{.49\textwidth}
        \centering
        \includegraphics[width=\textwidth]{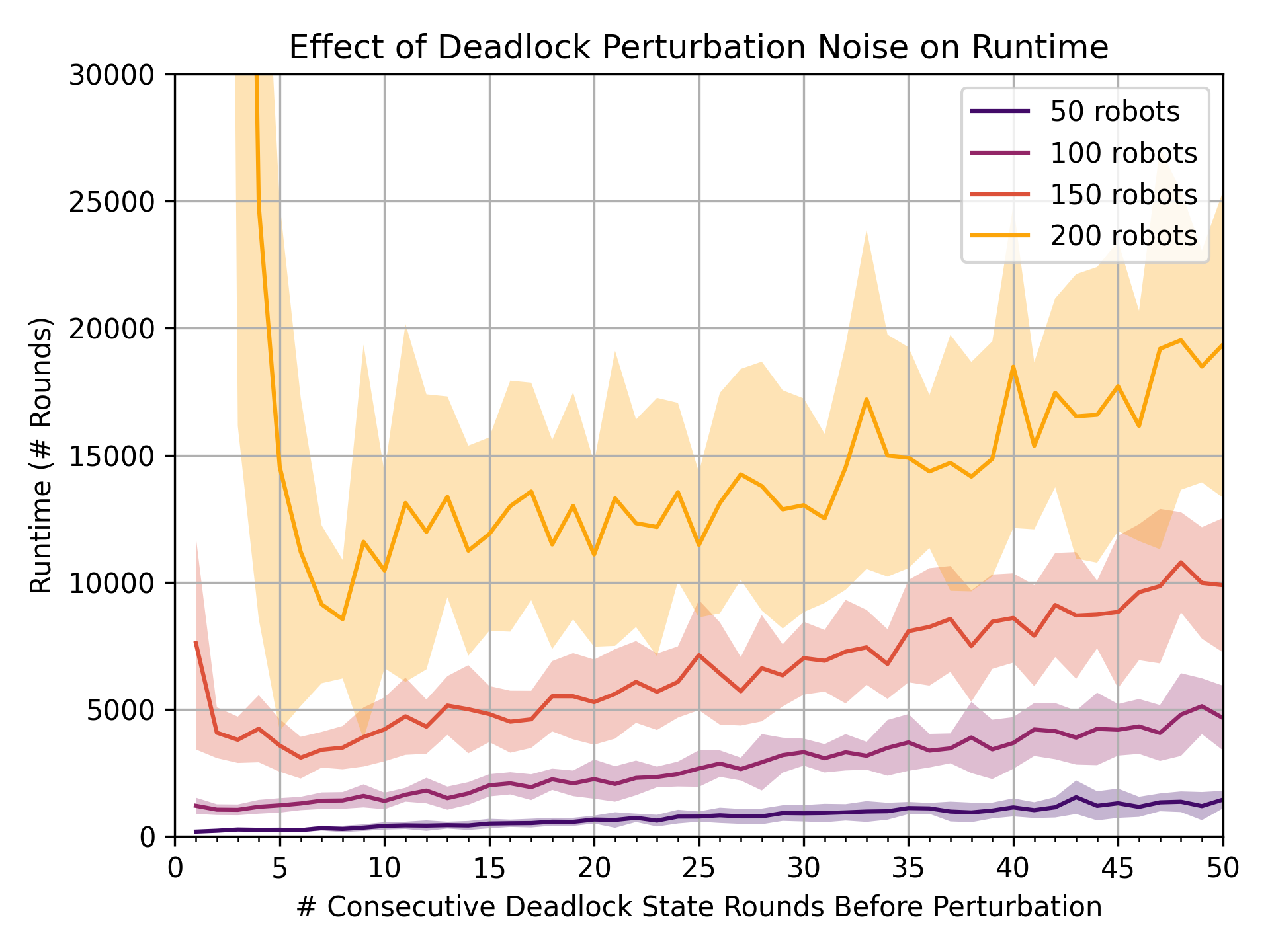}
        \caption{\centering Deadlock Perturbation}
        \label{fig:noise:perturb}
    \end{subfigure}
    \caption{The effects of \textbf{\textsf{(a)}} error probability and \textbf{\textsf{(b)}} deadlock perturbation on the discrete adaptation's time to aggregation for systems of $n = 50$ (purple), $n = 100$ (magenta), $n = 150$ (red), and $n = 200$ (orange) robots.
    Each experiment for a given system size and noise strength was repeated 20 times; average runtime is shown as a solid line and standard deviation is shown as an error tube.
    We consider systems that are within $15\%$ of the minimum dispersion value as aggregated.}
    \label{fig:noise}
\end{figure}

Unlike in the robustness analysis for the continuous setting (Section~\ref{sec:robust}), the discrete adaptation relies on noise to break free from frequently encountered deadlocks.
To determine how much noise the system needs to escape deadlock while still achieving aggregation, we measured the time required for a system to aggregate to within $15\%$ of the minimum dispersion value when using error probability or deadlock perturbation.
For error probability (\figtext~\ref{fig:noise:errorprob}), we observe slow convergence times for both very weak noise ($0.0 < p \leq 0.05$) and very strong noise ($p \geq 0.27$).
Intuitively, very small error probabilities may be insufficient for the robots to escape deadlock while large error probabilities cause the robots to resemble random motion instead of an ordered process progressing towards aggregation.
The intermediate range $0.1 \leq p \leq 0.25$ yields stable and rapid performance across system sizes, though the stable region is wider and more consistent for smaller systems.
Perhaps most interesting is that $p \approx 0.22$ yields near-optimal performance across system sizes; further investigation is needed to determine if this is a critical point independent of system size.

Deadlock perturbation also has a Goldilocks effect, with very strong noise (corresponding to small $d^*$ and frequent perturbations) and very weak noise (corresponding to large $d^*$ and infrequent perturbations) both yielding slow convergence while intermediate values of $d^*$ converge to aggregation the fastest (\figtext~\ref{fig:noise:perturb}).
Intuitively, runtime when using deadlock perturbation scales linearly with the perturbation delay $d^*$, since the longer robots wait before perturbing when blocked, the slower the progress towards aggregation.
Interestingly, a comparison between \figstext~\ref{fig:noise:errorprob} and~\ref{fig:noise:perturb} reveals that deadlock perturbation is less efficient than error probability with respect to runtime: for $n = 200$ robots, the optimal deadlock perturbation value $d^* \approx 8$ yields an average runtime that is over $3\times$ slower than that of the optimal error probability $p \approx 0.22$.


\section{Conclusion} \label{sec:conclude}

In this paper, we investigated the Gauci et al.\ swarm aggregation algorithm~\cite{Gauci2014-aggregation} which provably aggregates two robots and reliably aggregates larger swarms in experiment using only a binary line-of-sight sensor and no arithmetic computation or persistent memory.
We answered the open question of whether the algorithm guarantees aggregation for systems of $n > 2$ robots negatively, identifying how deadlock can halt the system's progress towards aggregation.
In practice, however, the physics of collisions and slipping work to the algorithm's advantage in avoiding deadlock; moreover, we showed that the algorithm is robust to small amounts of noise in its sensors and in its motion.
Next, we considered a generalization of the algorithm using cone-of-sight sensors, proving that for the situation of one moving robot and one static robot, the time to aggregation is improved by a linear factor over the original line-of-sight sensor.
Simulation results showed that small cone-of-sight sensors can also improve runtime for larger systems, though with diminishing returns.
Finally, in an effort to formally prove convergence of the algorithm when noise is explicitly modeled as a mechanism for breaking deadlock, we implemented a noisy, discrete adaptation that was shown to exhibit qualitatively similar behavior to the original, continuous algorithm.
However, the discrete adaptation exhibits occasional disaggregation and other fluctuations during its convergence period, complicating analysis techniques relying on consistent progress towards the goal state (e.g., a potential function argument).
It is possible that a proof showing convergence \textit{in expectation} can be derived, but we leave this for future work.

\bibliographystyle{plainurl}
\bibliography{ref}

\appendix

\section{Deriving a Lower Bound on a Robot's Angle of Rotation} \label{app:lowerbound}

In this appendix, we derive a lower bound on $\gamma$ --- the angle from the cone-of-sight axis to the first line intersecting $\boldsymbol{p}_i$ that is tangent to robot $j$ --- in terms of $\beta$, the fixed size of the cone-of-sight sensor (see \figtext~\ref{fig:cone}).
This lower bound is used in the proof of Theorem~\ref{thm:coneofsight} that shows a linear speedup in time to aggregation when using a cone-of-sight sensor over a line-of-sight sensor.
A trivial lower bound on $\gamma$ is $\gamma \geq 0$ which is achieved when $\beta = 0$ (i.e., when using a line-of-sight sensor).
Similarly, a trivial upper bound on $\gamma$ is $\gamma \leq \beta / 2$ which is achieved when $r_i = 0$ (i.e., when the robots are points as opposed to discs).

\begin{figure}
    \centering
    \includegraphics[width=.6\textwidth]{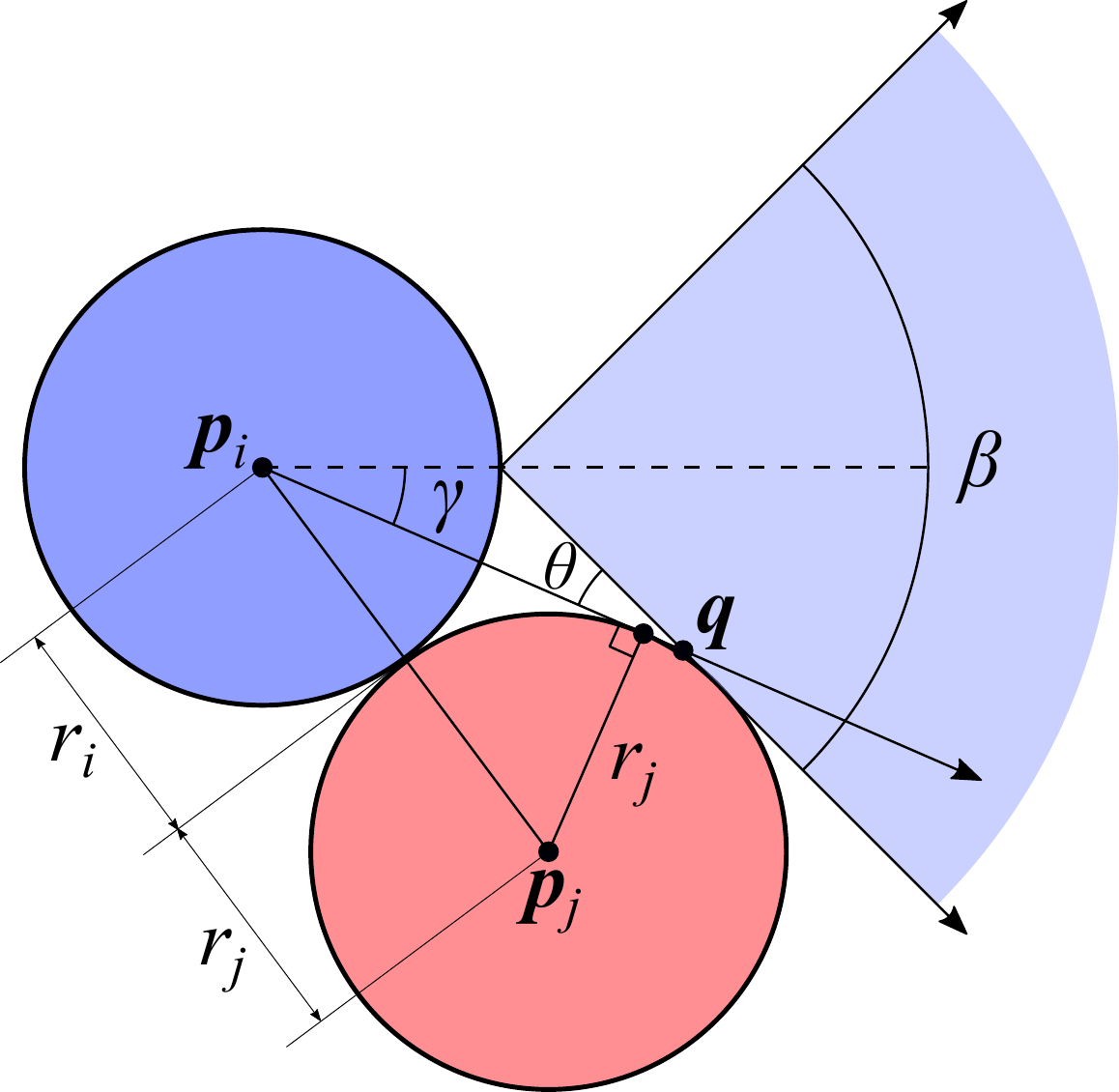}
    \caption{The setup considered in the derivation of the lower bound on $\gamma$.
    Robot $i$ has a cone-of-sight sensor of size $\beta$ and is touching another robot $j$. 
    Point $\boldsymbol{q}$ is the point of intersection between the two tangent lines of robot $j$.}
    \label{fig:lowerbound}
\end{figure}

To obtain a lower bound on $\gamma$ in terms of $\beta$, consider the scenario depicted in \figtext~\ref{fig:lowerbound} where the two robots are touching.
It is easy to see that this scenario yields the minimum value for $\gamma$ without the two robots overlapping.
Letting $z = ||\boldsymbol{q} - \boldsymbol{p}_i||$, the law of sines yields
\[\frac{\sin(\theta)}{r_i} = \frac{\sin(\pi - \beta/2)}{z} = \frac{\sin(\pi)\cos(\beta/2) - \cos(\pi)\sin(\beta/2)}{z} = \frac{\sin(\beta/2)}{z},\]
implying that $\sin(\theta) = r_i\sin(\beta/2) / z$.
Recall that we assumed $r_i = r_j$.
Thus, when robots $i$ and $j$ are touching, $||\boldsymbol{p}_j - \boldsymbol{p}_i|| = 2r_i$.
In this situation, it can be seen that $z \geq r_i\sqrt{3}$, so
\[\sin(\theta) \leq \frac{r_i\sin(\beta/2)}{r_i\sqrt{3}} = \frac{\sin(\beta/2)}{\sqrt{3}}.\]
Since $\gamma = \beta/2 - \theta$,
\[\gamma \geq \beta/2 - \arcsin\left(\frac{\sin(\beta/2)}{\sqrt{3}}\right)\]
which holds because $\theta \leq \arcsin\left(\sin(\beta/2) / \sqrt{3}\right)$.
The ratio
\[\frac{\gamma}{\beta} \geq \frac{\beta/2 - \arcsin\left(\frac{\sin(\beta/2)}{\sqrt{3}}\right)}{\beta}\]
has a positive first derivative with respect to $\beta \in (0, \pi)$, so it is minimized as $\beta \to 0$:
\[\lim_{\beta \to 0} \left(\frac{\beta/2 - \arcsin\left(\frac{\sin(\beta/2)}{\sqrt{3}}\right)}{\beta}\right) = \lim_{\beta \to 0} \left(\frac{1}{2} - \frac{\cos(\beta/2)}{2\sqrt{3} \cdot \sqrt{1-\frac{\sin^2(\beta/2)}{3}}}\right) = \frac{1}{2} - \frac{1}{2\sqrt{3}}.\]
This yields our desired lower bound: $\gamma \geq (1 - 1/\sqrt{3}) \cdot \beta/2$.

\end{document}